\newcommand{\set}[1]{\{#1\}}
\newcommand{\ex}[1]{\exp \left\{#1\right\}}
\newcommand{\avg}[1]{\left<#1\right>}
\newcommand{\condavg}[2]{\left<#1 \big| #2\right>}
\newcommand{\avgb}[1]{\Bigg<#1 \Bigg >}
\newcommand{\prob}[1]{P\left(#1\right)}
\newcommand{\argmax}[2]{\operatorname*{\text{argmax}}_{#1} \left\{#2\right\}}
\newcommand{\tpr}{\text{TPR}}
\newcommand{\fpr}{\text{FPR}}
\newcommand{\R}{\mathbb{R}}
\newcommand{\norm}[1]{\left\lVert#1\right\rVert}
\begin{document}
\title{Unsupervised Evaluation and Weighted Aggregation of Ranked
Predictions}

\author{\name Mehmet Eren Ahsen$^\dagger$ \email  mehmet.erenahsen@mssm.edu\\
\name Robert M Vogel$^{\ddagger \dagger}$ \email r.vogel@ibm.com \\
\name Gustavo A Stolovitzky$^{\ddagger \dagger}$ \email gustavo@us.ibm.com\\
\addr $^\dagger$Icahn School of Medicine at Mount Sinai\\
    Department of Genetics and Genomic Sciences\\
    One Gustave Levy Place, Box 1498\\
    New York, NY , USA\\
\addr $^\ddagger$IBM T.J. Watson Research Center\\
   1101 Kitchawan Road, Route 134,\\
   Yorktown Heights, N.Y., 10598, USA.\\
}

\editor{}

\maketitle

\begin{abstract}
Learning algorithms that aggregate predictions from an ensemble of diverse base classifiers consistently outperform individual methods.  Many of these strategies have been developed in a supervised setting, where the accuracy of each base classifier can be empirically measured and this information is incorporated in the training process.  However, the reliance on labeled data precludes the application of ensemble methods to many real world problems where labeled data has not been curated.  To this end we developed a new theoretical framework for binary classification, the Strategy for Unsupervised Multiple Method Aggregation (SUMMA), to estimate the performances of base classifiers and an optimal  strategy for ensemble learning from unlabeled data.
\end{abstract}

\begin{keywords}
  Ensemble learning, Unsupervised Learning, AUROC, Spectral Decomposition
\end{keywords}


\section{Introduction}

It has long been appreciated that combinations of independent and weak learning methods, in both classification and regression tasks, can be used to make a single strong learning method.
In their work, \citep{dietterich2000ensemble} found this phenomena to be universal for three reasons.  The first reason is statistical, in that real world data is often insufficiently large to uniquely infer a single model.  The second reason is computational limitations, as several commonly used models are non-convex, e.g. neural networks, and consequently often result in a parameter sets that are locally as opposed to globally optimal.  The third and last reason is representational: any individual model may be insufficiently complex to represent all the trends in the data.  In all such cases, combining weakly predictive methods produces a model with lower prediction errors and increased generalizability.

Intuitively, boosting the performance of a learning algorithm by combining weak learners is attributable to aggregating diverse predictions.  Specifically, when each model is independent and better than random their prediction errors average out \citep{dietterich2000ensemble,IEEE2006Polikar,zhou2012ensemble} when combined.

Strategies to train an ensemble learning method can be loosely divided into two categories.  The first relies on training several instances of a model on various splits of data.  Notable examples include random splits of the training data by bootstrap aggregating (bagging) \citep{breiman1996bagging} and carefully choosing data subsets in boosting \citep{schapire1990strength,freund1995desicion}.  The second group, however, solely focuses on optimizing the combination method of base learners subject to some \emph{a priori} assumptions.  By separating the development of an ensemble learner into two tasks, first training the base predictors and second the combination method, we can easily incorporate the predictions from pre-trained and heterogeneous methods.

The optimal combination strategies for multi-classification tasks rely upon the \emph{a priori} assumptions or constraints and the readily available data.  To demonstrate this \citep{kuncheva2014weighted} developed a probabilistic framework to define optimal combination rules under common scenarios.  For simplicity, lets only consider binary classification problems and the two corresponding scenarios.  If each base classifier has identical accuracies, the optimal combination strategy is simple majority vote.  In contrast, when individual method performances vary and are known, a weighted majority vote is optimal \citep{shapley1984optimizing}.  However, in the absence of labeled test data to quantify the performance of methods, it is not obvious how an optimal combination strategy can be applied.

Applying trained models in an unsupervised setting is not unusual.  Indeed, in transfer learning researchers often apply pre-trained models to make inferences from data that are not necessarily representative of the original training data set.  This strategy is a result of the increasing success of complex models in which sufficient computational resources, or the abundance of labeled data for rigorous training are often unavailable.  In this setting it is not obvious how the performance of each pre-trained model on the training data is applicable to the new use case, and consequently the optimal combination strategy for a heterogeneous ensemble method is unknown.

Intuitively, the absence of labeled test data precludes the empirical estimation of each base classifier's performance, making the simple majority vote one of the few possible combination methods.  While true in general, \citep{AppStats1979Dawid} circumvented this challenge by inferring the performance of each base classifier and the true class labels together.  Specifically, they recast their problem so that a maximum likelihood solution could be estimated by the Expectation Maximization (EM) algorithm \citep{dempster1977maximum}.  While an elegant solution to the missing labeled data problem, it suffers from an obvious limitation of the EM algorithm for non-convex optimization problems.  That is the inferred solution was heavily dependent on the initial parameters for optimization.

To solve the initial parameter problem of \citep{AppStats1979Dawid}, \citep{parisi2014ranking} developed the Spectral Meta-Learner (SML).  The authors found that the off-diagonal elements of the covariance matrix of binary predictions, here $\{-1, 1\}$, are related to the balanced accuracies of each base classifier.  With this information they developed SML by linearly approximating the maximum likelihood estimator of \citep{AppStats1979Dawid}.  While SML was shown to produce better seed parameters than majority vote for EM, the method is limited to base classifiers that produce binary predictions.

In this manuscript we developed a new weighted combination strategy for binary classification when class labels are not known.  Uniquely, our method uses rank predictions of base classifiers, which makes SUMMA agnostic to diverse scales of each base classifier, e.g. SVM and logistic regression for classification.  We show that, under an assumption of conditionally independent classifier rank predictions: i) we can reliably estimate the performances of each individual classifier in terms of AUROC (Area Under the Receiver Operating Characteristics), a commonly used performance measure in medical and machine learning literature; ii) we derive an approximate maximum likelihood estimator which yields an ensemble learner whose weights are proportional to the AUROC of each individual base classifier. We call our method the Strategy for Unsupervised Multiple Method Aggregation, or SUMMA.


\section{Theory}
\subsection{Problem Setup}
Consider a data set $\mathcal{D} = \set{(x_k, \sigma_k)}_{k=1}^N$ in which each sample $k$ consists of input data $x_k \in \mathcal{X}$ and belongs to an unknown class denoted by $\sigma_k \in \set{0, 1}$, where by convention $0$ denotes the negative class and $1$ denotes the positive class.
The set $\mathcal{X}$ denotes the feature space which could represent the abundance of RNA transcripts or protein molecules in biological data; or pixel intensities in image data; or stock prices of companies in finance problems.

Let $\set{g_i}_{i=1}^M$ represent an ensemble of classifiers, where each classifier, $g_i: \mathcal{X} \rightarrow \mathbb{R}$, is a mapping from the feature space to real numbers.  The output of each classifier is a real number which can be interpreted as a measure of the relative confidence that the sample belongs one of the classes, which, without loss of generality, will be assumed to be class 1.  For example, in a Bayesian framework the output of a classifier is the posterior probability that a sample belongs to the positive class. In the case of SVM, the output is the distance to the separating hyperplane. Recent research suggest appropriate calibration of the classifier outputs will boost the performance of the ensemble classifier \citep{whalen2013comparative,bella2013effect}.
In the current manuscript we will use the rank transformation as a calibration tool.  Transforming to sample rank encodes the ensemble of predictions by disparate methods to an identical scale, which in consequence, precludes complications introduced by {\it ad hoc} normalization strategies employed in other ensemble strategies \citep{bella2013effect}.

Moreover, ranking of the samples are implicitly done in popular evaluation metrics such as the AUROC \citep{marzban2004roc}. Converting scores to rank space has additional theoretical benefits that we develop over the subsequent sections. As a convention, in this manuscript we assume that samples more likely to belong to the positive class have higher scores as such when ranked in descending order they will appear with lower rank than the samples more likely to belong to the negative class.
Moreover, throughout the text we use the notation $g_i$ if the classifier output is confidence levels, and $f_i$ for the rank transformed output of the base classifiers.

\subsection{Performance Metric}

Consider the $i^{th}$ member of the ensemble of classifiers.  Given a fixed number of samples $N$ and prevalence $\rho$, its task is to assign each sample $k$ a rank $r$ with probability $P(f_i(x_k) = r|\sigma_k)$; for simplicity, we will refer to this quantity as $P_i(r | \sigma_k)$.  To measure the performance of each of the $M$ classifiers we compute the difference between the average sample ranks conditioned on their respective class which we denote by $\Delta$.  Mathematically, $\Delta$ is defined as follows.
\begin{definition}\label{def:delta}
  The performance of the $i^{th}$ classifier is measured by,
  \begin{align}
    \Delta_i &= \condavg{r}{\sigma=0}_i - \condavg{r}{\sigma=1}_i,\nonumber
  \end{align}
  where $\condavg{r}{\sigma=k}_i$, for $k=0,1$, represents the average rank given the respective class for the $i^{th}$ method.
\end{definition}

In Definition~\ref{def:delta}, $\Delta_i$ effectively categorizes methods as being either: random, $\Delta_i$=0; or informative $|\Delta_i| > 0$.  Intuitively, random methods are those unable to rank samples according to the latent class, a consequence of the fact that $P_i(r | \sigma=1) = P_i(r|\sigma=0) = \mathcal{U}(1, N)$, where $\mathcal{U}(1, N)$ denotes the uniform distribution on the set of integers $\{1,2,...,N\}$.
Informative methods, on the other hand,
discriminate rank assignments by the sample class.
As an example, consider two methods $f_i$ and $f_j$ whose performances are related by $\Delta_i = - \Delta_j > 0$ and $|\Delta_i| = |\Delta_j|$.  Each method equivalently utilizes signal for scoring samples, $|\Delta_i|=|\Delta_j|$, however each method has chosen opposing conventions for the ranks corresponding to each latent class.
Generally, this results from  systematic errors such as using the opposite convention for class labels. While $\Delta$ provides an intuitive measure of method performance, we show that it is closely associated to the canonical measure of performance, AUROC \citep{marzban2004roc}.


\begin{theorem} \label{thm:auc}
Given a ranked list of predictions and the corresponding sample class, $\set{(r_k, \sigma_k)}_{k=1}^N$, where $r_k$ is the rank assigned by the method to the sample $k$ and $\sigma_k$ is the true class of sample $k$,
\begin{equation} \label{auc:delta}
  AUROC = \frac{\Delta}{N}+\frac{1}{2},
\end{equation}
where $AUROC$ is estimated using the rectangle rule.
\end{theorem}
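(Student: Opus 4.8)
The plan is to reduce the integral form of the AUROC to a purely combinatorial statement about rank orderings, and then to massage the resulting rank sums into the symmetric quantity $\Delta$.

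First I would make the rectangle-rule evaluation of the AUROC explicit. Sweeping a decision threshold from the top-ranked sample downward, each negative sample encountered advances the false positive rate by $1/n_0$ (with $n_0$ the number of negatives), while each positive sample advances the true positive rate by $1/n_1$ (with $n_1$ the number of positives, $n_0+n_1=N$). Approximating the area under the ROC curve by rectangles of width $1/n_0$ and height equal to the current true positive rate shows that the AUROC equals $\frac{1}{n_0 n_1}$ times the number of \emph{concordant} pairs, i.e.\ the number of (positive, negative) pairs in which the positive sample receives the smaller (better) rank. This is the familiar Mann--Whitney/Wilcoxon identity, and establishing it cleanly --- including the bookkeeping of the rectangle heights and the tie-free assumption on the ranks $r_k$ --- is the conceptual heart of the argument.

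Second I would convert this pairwise count into class-conditional rank sums. Writing $R_1=\sum_{k:\sigma_k=1} r_k$ for the total rank of the positive class, the sum $\sum_{k:\sigma_k=1}(r_k-1)$ counts all ordered pairs in which some sample is ranked ahead of a positive sample; subtracting the $\binom{n_1}{2}$ positive-ahead-of-positive pairs isolates the negative-ahead-of-positive pairs, and the concordant count is then $n_0 n_1$ minus this quantity. Substituting $R_1 = n_1\,\condavg{r}{\sigma=1}$ expresses the AUROC entirely in terms of $\condavg{r}{\sigma=1}$, $n_0$, $n_1$, and $N$.

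Finally I would symmetrize using the total-rank identity $R_0+R_1=\sum_{r=1}^N r = \frac{N(N+1)}{2}$, equivalently $n_0\,\condavg{r}{\sigma=0}+n_1\,\condavg{r}{\sigma=1}=\frac{N(N+1)}{2}$, to trade $\condavg{r}{\sigma=1}$ for the difference $\Delta=\condavg{r}{\sigma=0}-\condavg{r}{\sigma=1}$. The main obstacle I anticipate is not any single step but the final cancellation: the intermediate expressions depend on the class sizes $n_0$ and $n_1$ asymmetrically, and it is only after invoking the total-rank constraint that these collapse to the prevalence-independent form $\Delta/N+1/2$. I would therefore keep $n_0$ and $n_1$ symbolic throughout and verify that the constraint forces their cancellation, rather than silently assuming a balanced problem; a small worked example (say one positive against three negatives) is a useful sanity check that the identity is exact and does not secretly require prevalence $\rho=1/2$.
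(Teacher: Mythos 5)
Your proposal is correct and follows essentially the same route as the paper's proof: both evaluate the rectangle-rule sum by noting that the FPR advances only at negative samples, reduce the area to a concordant-pair/rank-sum count, and finish with the total-rank identity $\sum_k r_k = N(N+1)/2$ to trade the class-conditional mean rank for $\Delta$. The only difference is bookkeeping --- you tally discordant pairs through the positive-class rank sum and work with $\condavg{r}{\sigma=1}$, whereas the paper counts true positives above each negative and works with $\condavg{r}{\sigma=0}$ --- a mirror image of the same argument.
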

\begin{proof}
Let $N$ denote the total number of samples, $N_1$ denote the positive samples and $N_0 = N - N_1$ denote the negative samples. The receiver operating characteristic (ROC) consists of points representing the False Positive Rates (FPR$_i$) and the True Positive Rates (TPR$_i$) empirically evaluated for threshold $i \in \{1,...,N\}$.  The area under the ROC (AUROC) can be estimated according to the rectangle rule:
\begin{equation} \label{AUC:def}
  AUROC = \sum_{i=0}^{N-1} \tpr_i \left(\fpr_{i+1}-\fpr_{i}\right),
\end{equation}
where $FPR_0:=0$ and $TPR_0:=0$.
The elements of the sum behave as follows.  If the $i^{th}$ ranked sample has a positive label then the $\fpr_{i+1}=\fpr_{i}$, and if it has a negative label then $FPR_{i+1} = FPR_{i} + 1/N_0$.  Moreover, if we let $\{r_{i_1},\cdots,r_{i_{N_0}}\}$ denote the ranks of negative samples, then for any threshold $i_l$ the number of true positives is given as $i_l-l$ and consequently $TPR_l= (i_l-l)/N_1$.  Using these observations Equation \eqref{AUC:def} becomes
\begin{eqnarray}
  AUROC &=& \sum_{i=1}^N \text{TPR}_i
							(\text{FPR}_{i+1}-\text{FPR}_{i}) \nonumber \\
      	&=&	\sum_{l : \sigma_l = 0} \frac{\text{TPR}_l}{N_1}\frac{1}{N_0} =
        		\sum_{l = 1}^{N_0}\frac{(r_{i_l}-l)}{N_1}\frac{1}{N_0}
        =\frac{\condavg{r}{\sigma = 0}}{N_1}-\frac{N_0+1}{2N_1}. \label{Delta:eq0}
\end{eqnarray}
Next we express $\Delta$ in terms of $\condavg{r}{\sigma = 0}$:
\begin{eqnarray} \label{eq:delta_cond}
	\frac{\Delta}{N} &=& \frac{\condavg{r}{\sigma = 0}-
			\condavg{r}{\sigma = 1}}{N}
		=
			\frac{\frac{\sum_{i : \sigma_i = 0}r_i}{N_0}-
			\frac{\sum_{i : \sigma_i = 1}r_i}{N_1}
			}{N} \nonumber \\
	&=&
		\frac{\frac{\sum_{i : \sigma_i = 0}r_i}{N_0}
		+ \frac{\sum_{i : \sigma_i = 0}r_i}{N_1} -
		\frac{\sum_{i=1}^N r_i}{N_1}
		}{N}
		=\frac{\condavg{r}{\sigma = 0}}{N_1}
		-\frac{N+1}{2N_1} \nonumber \\
	& \implies &
		\frac{\condavg{r}{\sigma = 0}}{N_1}=
		\frac{\Delta}{N} +\frac{N+1}{2N_1}.
		\label{AUC_Delta_eq1}
\end{eqnarray}
If we substitute Equation \eqref{AUC_Delta_eq1} into Equation \eqref{Delta:eq0}, we obtain
\begin{align}
	AUROC&=\frac{\condavg{r}{\sigma = 0}}{N_1}-\frac{N_1+1}{2N_0}
				= \Delta+\frac{N+1}{2N_1}-\frac{N_0+1}{2N_1} \nonumber \\
	&= \frac{\Delta}{N} + \frac{1}{2},
\end{align}
which completes the proof of the theorem.
\end{proof}

The equivalence of $\Delta$ to canonical statistical measures is not unique to the AUROC.
Indeed, we show that the Mann Whitney $U$ statistics can be computed from $\Delta$.

\begin{lemma}\label{lem:mwu}

	Let the Mann Whitney $U$ statistic computed from sample ranks of class 0 be designated as $U_0$.  Then $U_0$ is related to $\Delta$ by,
	\begin{align}
		U_0 = \frac{N_1N_0}{N}\left[\Delta + \frac{N}{2} \right]
	\end{align}
	where $N$ is the total number of samples, $N_1$ is the number of samples from class 1, $N_0$ is the number of samples from class 0, and $\Delta$ is defined according to Def~\ref{def:delta}.
\end{lemma}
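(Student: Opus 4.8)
The plan is to connect $U_0$ to the quantities already computed inside the proof of Theorem~\ref{thm:auc}, rather than expanding it from scratch. First I would recall the classical identity relating the Mann–Whitney statistic to the rank sum of class $0$: writing $R_0 = \sum_{i:\sigma_i=0} r_i = N_0\,\condavg{r}{\sigma=0}$ for the sum of ranks assigned to the negative samples, one has $U_0 = R_0 - \tfrac{N_0(N_0+1)}{2}$. Because the ranks form a permutation of $\{1,\dots,N\}$ (no ties), this is exact, and I would justify it by the standard counting argument: if the negative samples carry ranks $r_{i_1}<\cdots<r_{i_{N_0}}$, then among the $r_{i_l}$ samples with rank at most $r_{i_l}$ exactly $l$ are negative, so $r_{i_l}-l$ positives are ranked above the $l$-th negative, whence $U_0 = \sum_{l=1}^{N_0}(r_{i_l}-l)$.

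The key observation is that this summand $(r_{i_l}-l)$ is precisely the one appearing in Equation~\eqref{Delta:eq0} in the proof of Theorem~\ref{thm:auc}, where it was identified, up to the factor $1/(N_0 N_1)$, with the AUROC. I would therefore read off directly, with no new computation, that
\begin{align}
U_0 = \sum_{l=1}^{N_0}(r_{i_l}-l) = N_0 N_1 \cdot AUROC.\nonumber
\end{align}
Substituting the conclusion of Theorem~\ref{thm:auc}, $AUROC = \Delta/N + 1/2$, then gives
\begin{align}
U_0 = N_0 N_1\left(\frac{\Delta}{N}+\frac{1}{2}\right) = \frac{N_1 N_0}{N}\left[\Delta + \frac{N}{2}\right],\nonumber
\end{align}
which is the claimed formula. (As a cross-check, the same result follows by plugging $R_0 = N_0\,\condavg{r}{\sigma=0}$ and the identity $\condavg{r}{\sigma=0}/N_1 = \Delta/N + (N+1)/(2N_1)$ from Equation~\eqref{AUC_Delta_eq1} into $U_0 = R_0 - \tfrac{N_0(N_0+1)}{2}$ and simplifying; I would likely present this as the main derivation and note the AUROC shortcut.)

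The main obstacle I anticipate is not the algebra but pinning down the conventions so that the orientation of $U_0$ matches the AUROC computed earlier. Since the manuscript ranks samples in descending order (smaller rank meaning more likely positive), I must verify that $U_0$ as defined here counts the negative–positive pairs in which the positive sample is ranked above the negative one, i.e. the correctly ordered pairs, which is exactly the orientation implicit in the rectangle-rule AUROC of Theorem~\ref{thm:auc}. I would also state explicitly the no-ties assumption, since the identity $U_0 = R_0 - \tfrac{N_0(N_0+1)}{2}$ relies on the ranks being a full permutation of $\{1,\dots,N\}$. Once these conventions are fixed, the lemma reduces to the substitution above.
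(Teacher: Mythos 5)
Your proof is correct, but it inverts the logical direction the paper uses, and that difference matters for how the lemma functions in the manuscript. The paper proves Lemma~\ref{lem:mwu} from scratch: starting from $U_0 = \sum_{i:\sigma_i=0} r_i - N_0(N_0+1)/2$, it rewrites the rank sum as $N_0\condavg{r}{\sigma=0}$, eliminates $\avg{r}$ via the law of total expectation $\avg{r} = \rho\condavg{r}{\sigma=1} + (1-\rho)\condavg{r}{\sigma=0}$, and obtains the result using only the definition of $\Delta$ and the fact that the ranks are a permutation of $\{1,\dots,N\}$ --- deliberately without invoking Theorem~\ref{thm:auc}. The reason is stated immediately after the lemma: combined with the classical identity $AUROC = U_0/(N_0N_1)$ (Equation~\eqref{eq:u_auc}), the lemma is meant to furnish an \emph{independent second proof} of Theorem~\ref{thm:auc}. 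Your primary shortcut --- identifying $U_0 = \sum_l (r_{i_l}-l) = N_0N_1\cdot AUROC$ from Equation~\eqref{Delta:eq0} and then substituting the theorem's conclusion --- is logically sound (the theorem was proved independently by the rectangle rule), but it makes the lemma a corollary of the theorem, so the paper's subsequent remark would become circular and would have to be dropped. Your second derivation, the one you say you would present as the main argument --- plugging $\condavg{r}{\sigma=0} = N_1\Delta/N + (N+1)/2$ from Equation~\eqref{AUC_Delta_eq1} into $U_0 = N_0\condavg{r}{\sigma=0} - N_0(N_0+1)/2$ --- is essentially the paper's own computation reorganized, since \eqref{AUC_Delta_eq1} encodes the same permutation-sum identity; it is self-contained in the relevant sense (it uses no AUROC content) and preserves the lemma's role. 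In short: your algebra is right, your attention to the rank-orientation convention and the no-ties assumption is appropriate, and route two is the one to keep; route one buys brevity at the cost of the lemma's independence from the theorem.
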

\begin{proof}
	Recall the Mann Whitney $U$ statistic,
	\begin{align}
		U_0 &= \sum_{i:\sigma_i=0} r_i - \frac{N_0(N_0 + 1)}{2}\nonumber
	\end{align}
	where $r_i$ are the ranks of samples from class 0.  The sum of sample ranks is simply $N_0\condavg{r}{1}$.  By substitution,
	\begin{align}
		U_0 &= N_0\left[\condavg{r}{0} -
					\frac{N_1 + N_0 + 1}{2} -\frac{1}{2} +
					\frac{N_1 + 1}{2} \right],\nonumber\\
				&= N_0\left[
					\condavg{r}{0} - \avg{r} + \frac{N_1}{2}
				\right].\nonumber
	\end{align}
	Where, $\avg{r}$ may be written in terms of the conditional average ranks and the prevalence of class 1 ($\rho = N_1/N$) by, $\avg{r} = \rho\condavg{r}{1} + (1-\rho)\condavg{r}{0}$, and subsituting,
	\begin{align}
		U_0 &= N_0\left[
			\condavg{r}{0} - \rho\condavg{r}{1}-(1-\rho)\condavg{r}{0}+\frac{N\rho}{2}
		\right]\nonumber\\
		&= N_0\rho\left[ \left(\condavg{r}{0}-\condavg{r}{1}\right)
			+ \frac{N}{2}
		\right]\nonumber\\
		&= \frac{N_0N_1}{N}\left[\Delta + \frac{N}{2}\right]\nonumber
	\end{align}
	gives our desired result.
\end{proof}
While Lemma~\ref{lem:mwu} in itself is interesting, it also provides an additional proof of Theorem~\ref{thm:auc}.  This is due to the relationship of the $U$ statistic and the AUROC, \citep{hanley1982meaning,QJRMS2002Mason},
\begin{align}\label{eq:u_auc}
	AUROC &= \frac{U_0}{N_0 N_1}
\end{align}
which, by substituting Lemma~\ref{lem:mwu} into Equation~\eqref{eq:u_auc} results in Theorem~\ref{thm:auc}.

Theorem~\ref{thm:auc} provides the correspondence between $\Delta$ and AUROC which we will rely upon throughout the manuscript.  The proceeding theory, however, will use $\Delta$ as the performance metric due to mathematical convenience.  All of our results may be transformed to AUROC by applying Theorem~\ref{thm:auc}.

%
%

\subsection{Conditionally independent classifiers}

Each classifier in the ensemble $\set{f_i}_{i=1}^M$ assigns sample $k$ a rank in accordance to its respective class label, $\sigma_k$.  If any subset of classifiers, assigns ranks to sample k  $(1 \leq k \leq N)$ independently, then their conditional distribution factorizes,
\begin{align}
  \prob{f_i(x_k) = r, f_j(x_k) = s, \cdots, f_q(x_k) = l | \sigma_k} &=
    	P_i(r | \sigma_k)
    	P_j(s | \sigma_k)
			\cdots
			P_q(l | \sigma_k).\label{def:inde}
\end{align}
Consequently, we may compute the $n^{th}$ cross moment of such classifiers as a product of their respective conditional moments.
%
%
We will use this decomposition property of conditionally independent classifiers to show that $\Delta_i$ can be estimated from the central moments in the succeeding sections.
Next Theorem proves this decomposition property.

%
%
%
\begin{theorem}\label{thm:indep}
Suppose we are given $n$ ($3\leq n$) classifiers that are  $n$th order conditionally independent, i.e.
\begin{align}
  \prob{f_1(x_k) = s_1, \cdots,f_n(x_k) = s_n | \sigma_k} &=
   \prod_{i=1}^n P_i(s_i | \sigma_k)
   \label{def:inden},
\end{align}
and for $l \leq n$, let the $l^{th}$ order covariance central moment tensor, $Q_{l}$ ,defined as
\begin{align}
  Q_{l}(1,\cdots,l) &= \biggl \langle \bigl(r_{1}-\avg{r_{1}}\bigr)\cdots \bigl(r_{l}-\avg{r_{l}}\bigr)\biggr \rangle,\label{def:Qt}
\end{align}
where w.l.o.g. we denoted a given subset of methods of size $l$ by the set $\{1,\cdots,l\}$.
Then for any $2 \leq l \leq n$,
\begin{equation}
Q_{l}(1,\cdots,l)=\rho(1-\rho)(\rho^{l-1}-(\rho-	1)^{l-1}) \prod_{j=1}^l \Delta_{j}.
\end{equation}
\end{theorem}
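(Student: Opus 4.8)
The plan is to compute $Q_l$ directly by conditioning on the latent class label and then exploiting conditional independence to factorize. First I would apply the law of total expectation over $\sigma\in\set{0,1}$, writing
\[
Q_l(1,\ldots,l) = \rho\,\condavg{\textstyle\prod_{j=1}^l (r_j - \avg{r_j})}{\sigma=1} + (1-\rho)\,\condavg{\textstyle\prod_{j=1}^l(r_j - \avg{r_j})}{\sigma=0},
\]
where $\rho$ is the prevalence of class $1$. Because each centered variable $r_j - \avg{r_j}$ is a (constant-shifted) function of $r_j$ alone, the $n$th order conditional independence hypothesis~\eqref{def:inden}, applied to the subset $\set{1,\ldots,l}$ with $l \leq n$, lets me factorize each conditional expectation of the product into a product of single-classifier conditional expectations.

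The crucial step is then to evaluate the centered conditional means and re-express them through $\Delta_j$. Using the same decomposition of the unconditional mean as in the proof of Lemma~\ref{lem:mwu}, namely $\avg{r_j} = \rho\condavg{r}{\sigma=1}_j + (1-\rho)\condavg{r}{\sigma=0}_j$, together with the definition $\Delta_j = \condavg{r}{\sigma=0}_j - \condavg{r}{\sigma=1}_j$, I would obtain the two identities
\[
\condavg{r_j - \avg{r_j}}{\sigma=0} = \rho\,\Delta_j, \qquad \condavg{r_j - \avg{r_j}}{\sigma=1} = -(1-\rho)\,\Delta_j.
\]
This is what injects the factor $\prod_j \Delta_j$ into the computation and explains why it reappears at every order $l$; the $\rho$-dependence of the prefactor comes entirely from these two expressions and the class probabilities weighting them.

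Substituting the factorized forms back gives
\[
Q_l = (1-\rho)\rho^l\prod_{j=1}^l\Delta_j + \rho(-1)^l(1-\rho)^l\prod_{j=1}^l\Delta_j = \rho(1-\rho)\Bigl[\rho^{l-1} + (-1)^l(1-\rho)^{l-1}\Bigr]\prod_{j=1}^l\Delta_j,
\]
and I would finish by reconciling the bracket with the stated form via $(\rho-1)^{l-1} = (-1)^{l-1}(1-\rho)^{l-1}$, so that $-(\rho-1)^{l-1} = (-1)^l(1-\rho)^{l-1}$, which matches exactly. I do not expect a serious obstacle here; the two points that most warrant care are justifying that the conditional expectation factorizes even though the centering uses the \emph{unconditional} mean $\avg{r_j}$ rather than the conditional mean (shifting by a constant preserves conditional independence, so the factorization is unaffected), and correctly tracking the signs arising from $-(1-\rho)$ raised to the power $l$ in the final algebraic simplification.
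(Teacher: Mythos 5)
Your proposal is correct and takes essentially the same route as the paper: condition on the class via the law of total expectation, factorize the conditional expectation of the product using conditional independence, and reduce everything to the two identities $\condavg{r_j - \avg{r_j}}{\sigma=0} = \rho\,\Delta_j$ and $\condavg{r_j - \avg{r_j}}{\sigma=1} = (\rho-1)\,\Delta_j$, which are precisely the paper's Claims 1 and 2. The only difference is presentational: the paper establishes the factorization by induction on $l$, peeling off one factor per step, whereas you invoke it directly (correctly observing that centering by the constant $\avg{r_j}$ preserves conditional independence), and your sign bookkeeping $-(\rho-1)^{l-1} = (-1)^{l}(1-\rho)^{l-1}$ matches the stated formula.
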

%
%
\begin{proof}
We denote the $n$ methods as $\{1,...,n\}$ and proceed by induction on $l$. First note that the $n$th order conditional independence assumption implies that any given subset of methods $\{1,...,n\}$ are conditionally independent. Next, let $l=2$, and assume the two methods $1$ and $2$ are conditionally independent as such equation~\eqref{def:inden} is satisfied with $n=2$.  Then by using law of total expectation, we obtain the following:
\begin{eqnarray}  \label{eq:ci}
	Q_{2}(1,2) &=& \avg{(r_{1}-\avg{r_{1}})
	(r_{2}-\avg{r_{2}})}
	=\avg{r_1 r_2}-\avg{r_1}\avg{r_2} \nonumber \\
	&=&\avg{r_1 r_2|\sigma = 1}\rho+
	\avg{r_1 r_2|\sigma = 0}(1-\rho)
	-
	\avg{r_1}\avg{r_2}\nonumber \\
	&=&\avg{r_1|\sigma = 1}
	\avg{r_2|\sigma = 1}(\rho)+
	\avg{r_1 |\sigma = 0}
	\avg{r_2 |\sigma = 0}
	(1-\rho)
	-
	\avg{r_1}\avg{r_2} \label{pf:indep:eq1}
\end{eqnarray}
where we denote by $\rho$ the prevalence of class $\sigma$=1.  Similarly,  from the law of total expectation we have
\begin{eqnarray}
	\avg{r_1}&=&\avg{r_1|\sigma = 1} \rho +
	\avg{r_1| \sigma = 0}(1-\rho). \nonumber \\
\avg{r_2}&=&\avg{r_2|\sigma = 1} \rho +
	\avg{r_2| \sigma = 0}(1-\rho).
    \label{pf:indep:eq2}
\end{eqnarray}
Substituting Equation \eqref{pf:indep:eq2} into Equation \eqref{pf:indep:eq1}, we obtain
\[
	Q_{2}(1,2)=\rho(1-\rho)
	\Delta_1 \Delta_2.
\]
To ease the notation for the inductive step, for any $l \geq 1$, let $R_l$ be defined as follows
\[
R_l=(r_1-\avg{r_1}) \cdots (r_l-\avg{r_l})=(r_l-\avg{r_l}) R_{l-1},
\]
where $R_0=1$ and we used the fact that the methods $\{1,\cdots,l\}$ are conditionally independent.
Also note that $Q_l(1,\cdots,l)=\langle R_l \rangle$
Next, we inductively show two formulas which are required for the proof of the theorem.\\
\textbf{Claim 1:} For any $1 \leq l \leq n,$ $\avg{R_l| \sigma = 0} = \rho^l \prod_{i=1}^l \Delta_i.$\\
\textbf{Proof:} For $l=1$,
\begin{eqnarray}
	\avg{R_1| \sigma=0}&=&
	\avg{r_1-\avg{r_1}| \sigma=0}= \avg{r_1 | \sigma=0}-\avg{r_1} \nonumber \\
		&=& \avg{r_1 | \sigma=0}-
			\rho \avg{r_1 | \sigma=1}
			-(1-\rho) \avg{r_1 | \sigma=0} \nonumber \\
		&=&\rho( \avg{r_1 | \sigma=0}- \avg{r_1 | \sigma=1})=\rho \Delta_1. \label{pf:indep:eq3}
\end{eqnarray}
Next assume the claim is true for $l-1$, then from the inductive hypothesis we have
\begin{equation} \label{pf:indep:eq4}
	\avg{R_{l-1}| \sigma = 0} = \rho^{l-1} \prod_{j=1}^{l-1}\Delta_j.
\end{equation}
Let us now prove that if it is true for $l-1$, then it is true for $l$. From conditional independence and total law of expectation we get
\begin{eqnarray}
	\avg{R_l| \sigma=0}&=&
	\avg{(r_l-\avg{r_l})R_{l-1}| \sigma=0}
		= \left(\avg{r_l | \sigma=0}-\avg{r_l}\right)
			\avg{R_{l-1} | \sigma=0} \nonumber \\
	&=& \avg{r_l| \sigma=0}
	\avg{R_{l-1}| \sigma=0}-
	\avg{r_l}
	\avg{R_{l-1}| \sigma=0} \nonumber \\
	&=&\avg{r_l| \sigma=0}
	\avg{R_{l-1}| \sigma=0}-
	(1-\rho)\avg{r_l| \sigma=0}
	\avg{R_{l-1}| \sigma=0}-
	\rho\avg{r_l| \sigma=1}
	\avg{R_{l-1}| \sigma=0} \nonumber \\
	&=&\rho\avg{R_{l-1}| \sigma=0}(\avg{r_l| \sigma=0}-\avg{r_l| \sigma=1})\nonumber \\
	&=&\rho\avg{R_{l-1}| \sigma=0}\Delta_l=\rho \Delta_l\rho^{l-1} \prod_{j=1}^{l-1}\Delta_j=\rho^{l} \prod_{j=1}^{l}\Delta_j,\nonumber
\end{eqnarray}
where the previous to last equality follows from inductive assumption and completes the proof.\\

\noindent\textbf{Claim 2:} For any $1 \leq l \leq n,$ $\avg{R_n| \sigma = 1} = (\rho-1)^n \prod_{i=1}^l \Delta_i.$\\
\textbf{Proof:} For $l=1$,
\begin{eqnarray}
	\avg{R_1| \sigma=1}&=&
	\avg{r_1-\avg{r_1}| \sigma=1}= \avg{r_1 | \sigma=1}-\avg{r_1} \nonumber \\
	&=& \avg{r_1 | \sigma=1}-
	\rho \avg{r_1 | \sigma=1}
	-(1-\rho) \avg{r_1 | \sigma=0} \nonumber \\
	&=&-(1-\rho)( \avg{r_1 | \sigma=0}- \avg{r_1 | \sigma=1})=(\rho-1) \Delta_1. \label{pf:indep:eq7}
\end{eqnarray}
Next assume the claim is true for $l-1$, then from the inductive hypothesis we have
\begin{equation}
	\avg{R_{l-1}| \sigma = 1} = (\rho-1)^{l-1} \prod_{i=1}^{l-1}\Delta_i.
\end{equation}
For $l$, by the conditional independence and total law of expectation we get
\begin{eqnarray}
	\avg{R_l| \sigma=1}&=&
	\avg{(r_l-\avg{r_l})R_{l-1}| \sigma=1}= (\avg{r_l | \sigma=1}-\avg{r_l})\avg{R_{l-1}| \sigma=1} \nonumber \\
	&=&\avg{r_l| \sigma=1}
	\avg{R_{l-1}| \sigma=1}-
	\avg{r_l}
	\avg{R_{l-1}| \sigma=1} \nonumber \\
	&=&\avg{r_l| \sigma=1}
	\avg{R_{l-1}| \sigma=1}-
	\rho\avg{r_l| \sigma=1}
	\avg{R_{l-1}| \sigma=1}-
	(1-\rho)\avg{r_l| \sigma=0}
	\avg{R_{l-1}| \sigma=1} \nonumber \\
	&=&(\rho-1)\avg{R_{l-1}| \sigma=1}(\avg{r_l| \sigma=0}-\avg{r_l| \sigma=1})\nonumber \\
	&=&(\rho-1)\avg{R_{l-1}| \sigma=1}\Delta_l=(\rho-1) \Delta_l(\rho-1)^{l-1} \prod_{j=1}^{l-1}\Delta_j=(\rho-1)^{l} \prod_{j=1}^{l}\Delta_j,\nonumber
\end{eqnarray}
Now we are ready to prove the main theorem.
Using law of total expectation and claim $1$ and $2$, we obtain the following set of equations
\begin{eqnarray}
	\avg{R_l}&=&\avg{R_l | \sigma=1}\rho +\avg{R_l | \sigma=0}(1-\rho)=(\rho(\rho-1)^l+(1-\rho)\rho^l)\prod_{j=1}^l\Delta_j\nonumber \\
	&=&\rho(1-\rho)(\rho^{l-1}-(\rho-1)^{l-1})\prod_{j=1}^l\Delta_j,
\end{eqnarray}
which completes the proof of the theorem.
\end{proof}
%

Theorem~\ref{thm:indep} shows that under the assumption of conditionally independent ensemble members the $n^{th}$ central moment contains information of each methods performance, $\Delta_i$.  In addition, note the symmetry $\rho \rightarrow 1-\rho$, and $\Delta \rightarrow -\Delta$ of the l-th centered moment. This symmetry is expected because the classes 1 and 0 could be called 0 and 1, and the same results should hold.  The next section leverages on various strategies to retrieve estimates of $\Delta_i$ from data.
%
\subsection{Ranking Classifiers using the Covariance Matrix}
In the previous section, we showed that $\Delta$ is both an intuitive and statistically principled measure of classifier performance.  However, direct calculation of $\Delta$, from its definition, requires knowledge of each sample's class label.  In the current section we show how to estimate a value $v_i$ that is proportional to $\Delta_i$ using the covariance matrix of predicted sample ranks.  For this we rely upon Theorem~\ref{thm:indep}, which states a relationship between $\Delta_i$ and the central moments of rank predictions from conditionally independent base classifiers.

Recall that $Q_2$ represents the covariance matrix of the ranked predictions from each base classifier, i.e.
\begin{align}\label{def:Q}
  Q_2(i, j) &= \avg{(r_{i}-\avg{r_{i}})
	(r_{j}-\avg{r_{j}})},
\end{align}
for given two methods $i$ and $j$.
For an arbitrary set of base classifiers, not necessarily conditionally independent, their covariance consists of the both intraclass and interclass correlation of rank predictions.  The intraclass correlation is often a manifestation of similarities between the algorithms or training data of base classifier pairs.  Here the methods tend to rank samples given either latent class together.  Whereas the interclass correlation represents the agreement of the average sample rank predictions for each latent class.  In other words, pairs of base classifiers assign a sample rank less than or greater than $N/2$ for samples from latent class 1 and 0, respectively.  Therefore, if the base classifiers are conditionally independent the intraclass correlation vanishes, making the measured correlation exclusively attributable to their mutual agreement of ranks based upon each sample's latent class label. This was the essence of Theorem~\ref{thm:indep}, and will be utilized in this section.

%
%
\begin{theorem} \label{thm:r1}
	The covariance matrix $Q_2$ as defined in Equation \eqref{def:Q} can be expressed as,
  \begin{equation}\nonumber
    Q_2(i, j) =
    \begin{cases}
      \frac{N^2 - 1}{12} & \text{if } i = j\\
      \rho(1-\rho) \Delta_i\Delta_j &
        \text{if } i\neq j
    \end{cases}
  \end{equation}
  if all the methods are mutually conditionally independent.
\end{theorem}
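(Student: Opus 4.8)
The plan is to treat the two cases of the piecewise formula separately, since they are governed by quite different considerations. The off-diagonal case ($i\neq j$) will follow essentially for free from the machinery already established in Theorem~\ref{thm:indep}, whereas the diagonal case ($i=j$) is a purely combinatorial fact about ranks that holds irrespective of the conditional-independence hypothesis.

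For the off-diagonal entries I would simply specialize Theorem~\ref{thm:indep} to $l=2$. That theorem asserts that, under mutual conditional independence,
\[
  Q_2(i,j) = \rho(1-\rho)\bigl(\rho^{\,l-1} - (\rho-1)^{\,l-1}\bigr)\Delta_i\Delta_j,
\]
and for $l=2$ the bracketed factor collapses, since $\rho^{1} - (\rho-1)^{1} = \rho - \rho + 1 = 1$. Hence $Q_2(i,j) = \rho(1-\rho)\Delta_i\Delta_j$, exactly as claimed, and no work beyond invoking the earlier result is required.

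For the diagonal entries the key observation is that each base classifier $f_i$ assigns the $N$ samples a rank that is a \emph{permutation} of the integers $\set{1,\dots,N}$. Consequently the marginal distribution of $r_i$ over all samples is the discrete uniform distribution $\mathcal{U}(1,N)$, independent of any class-conditional structure, and the diagonal entry $Q_2(i,i) = \avg{(r_i - \avg{r_i})^2}$ is precisely the variance of $\mathcal{U}(1,N)$. I would compute this directly from $\avg{r_i} = (N+1)/2$ together with the standard identity $\sum_{r=1}^{N} r^2 = N(N+1)(2N+1)/6$; the elementary algebra then reduces the variance to $(N^2-1)/12$ in a single line.

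There is no real obstacle here; the point worth emphasizing is structural rather than technical. The diagonal result does \emph{not} rely on the conditional-independence assumption at all, as it reflects only the permutation property of ranks, while the hypothesis enters solely through Theorem~\ref{thm:indep} to fix the off-diagonal entries. The sole routine care needed is the short computation reducing the variance of the uniform distribution on $\set{1,\dots,N}$ to $(N^2-1)/12$.
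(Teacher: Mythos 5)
Your proposal is correct and follows essentially the same route as the paper's own proof: the off-diagonal entries come from specializing Theorem~\ref{thm:indep} to $l=2$ (where indeed $\rho-(\rho-1)=1$), and the diagonal entries are the variance $(N^2-1)/12$ of the discrete uniform distribution on $\set{1,\dots,N}$, a consequence of ranks forming a permutation. Your added remark that the diagonal case needs no conditional-independence assumption is a nice clarification, but it does not change the argument.
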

\begin{proof}
	Suppose $i \neq j$, then the result follows from Theorem~\ref{thm:indep} by letting $l=2$.  Next, let $i = j$ and recall that $r_i$ is a vector in which each element represents the unique rank of each sample $k \in \set{1, \dots, N}$.  Then, the variance, $Q_2(i, i)$, of $r_i$ is that of an uniform discrete distribution, $(N^2-1)/12$, which completes the proof of the theorem.
\end{proof}
Theorem~\ref{thm:r1} states that the agreement of sample ranks predicted by pairs of base classifiers is due to their independent ability to rank samples by their latent class label (similar results were found for binary predictions in \citep{parisi2014ranking}).    This can be made intuitive by a simple example and counter example. Consider two base classifiers, one that perfectly assigns a rank to samples according to the sample's class and the other that assigns sample ranks at random.  Despite one of the methods perfect classification, the resulting correlation is zero due to the random assignments of the uninformative base classifier.  This intuition is captured by the product of their performances, which in this example is zero.  However, if instead both classifiers perfectly rank samples according to their latent class label, their predictions will be maximally correlated.

Inspection of Theorem~\ref{thm:r1} motivates a strategy for estimating each base predictors performance metric, $\Delta$, from the covariance matrix.  Here we see that $Q_2$ can be decomposed to
\begin{align}
	Q_2 = R - diag(R) + \frac{N^2 - 1}{12}I
\end{align}
where $I$ is the identity matrix and $R$ is a rank one matrix $\lambda vv^T$, with
\begin{align}
  \lambda &:= \rho(1-\rho)\sum_{j = 1}^M \Delta_j^2\label{eq:lambda}
\end{align}
and
\begin{align}
  v_i :=\frac{\Delta_i}{\sqrt{\sum_{j = 1}^M \Delta_j^2}} .\label{eq:eigvector}
\end{align}
Consequently, inferring the performance of each base classifier amounts to estimating the eigenvector $v$ corresponding to $R$ whose off-diagonal entries are identical to the covariance matrix. In the next section we provide two strategies for estimating the diagonal entries of $R$ from real data and derive a maximum likelihood estimator using this information.

Theorem~\ref{thm:r1} represents a sufficient, when $M>=4$, but not a necessary condition for estimating each base classifier's performance. 
In other words, if the classifiers are conditionally independent, then we take advantage of the decomposition in Theorem~\ref{thm:r1} to estimate the performance of the base classifiers.
Note that the condition that $M>=4$ is important because if $M<4$ the system is under-determined.

Before we present our next result, we will introduce some notation.  Throughout the rest of the paper we say that a matrix $A$ has a rank-one approximation if there exists a vector $q$ and a diagonal matrix $D$ s.t.
\begin{equation}
	A=q q^t + D.\nonumber
\end{equation}

%
%
\begin{theorem} \label{thm:r1suf}
  Suppose that the covariance matrix $Q_2$ has a rank-1 approximation, i.e. there exists a vector $q$ and a diagonal matrix $D$ such that
  \begin{equation}
    Q_2 = qq^T + D.\nonumber
  \end{equation}
If one base classifier (classifier 1) is independent of all the others (classifiers i $\neq$ 1), and two additional classifiers (classifiers 2 and 3) are conditionally independent, then the performance of the $i^{th}$ method is given by $q_i=\sqrt{\rho(1-\rho)}\Delta_i$.
\end{theorem}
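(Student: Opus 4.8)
The plan is to read the rank-one hypothesis $Q_2 = qq^T + D$ purely as a constraint on the \emph{off-diagonal} entries. Since $D$ is diagonal it contributes nothing away from the diagonal, so the hypothesis is exactly equivalent to $q_iq_j = Q_2(i,j)$ for every $i \neq j$, while the diagonal (the variances) remains unconstrained because it is absorbed into $D$. This is why the proof never needs to know the variances, and it reduces the whole argument to combining a small number of off-diagonal identities.

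First I would pin down the relevant off-diagonal entries from the independence hypotheses using the $l=2$ case of Theorem~\ref{thm:indep} (equivalently the off-diagonal formula of Theorem~\ref{thm:r1}), which applies to any single conditionally independent pair. Because classifier $1$ is conditionally independent of every other classifier, $Q_2(1,j) = \rho(1-\rho)\Delta_1\Delta_j$ for all $j\neq 1$; because classifiers $2$ and $3$ are conditionally independent, $Q_2(2,3) = \rho(1-\rho)\Delta_2\Delta_3$. Combining these with the off-diagonal matching from the first step yields three product relations among the pivot methods,
\begin{equation}
q_1 q_2 = \rho(1-\rho)\Delta_1\Delta_2,\qquad q_1 q_3 = \rho(1-\rho)\Delta_1\Delta_3,\qquad q_2 q_3 = \rho(1-\rho)\Delta_2\Delta_3. \nonumber
\end{equation}

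The algebraic core is then a one-line elimination: multiplying the first two relations and dividing by the third cancels $q_2$ and $q_3$ and gives $q_1^2 = \rho(1-\rho)\Delta_1^2$, hence $q_1 = \pm\sqrt{\rho(1-\rho)}\,\Delta_1$. Having fixed $q_1$, I would propagate the scale to every remaining method through classifier $1$'s \emph{global} independence: for each $j\neq 1$, the identity $q_1 q_j = \rho(1-\rho)\Delta_1\Delta_j$ solves to $q_j = \rho(1-\rho)\Delta_1\Delta_j/q_1 = \pm\sqrt{\rho(1-\rho)}\,\Delta_j$, with the sign inherited consistently from $q_1$. This establishes $q_i = \pm\sqrt{\rho(1-\rho)}\,\Delta_i$ for all $i$.

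The remaining points are bookkeeping I would handle at the end. The elimination divides, so it requires $\rho\in(0,1)$ (so $\rho(1-\rho)>0$) and $\Delta_1,\Delta_2,\Delta_3\neq 0$, i.e. the pivot methods must be informative; this is the natural regime, since an uninformative classifier contributes a zero row to the rank-one part and carries no performance signal. The overall sign is the genuine $qq^T=(-q)(-q)^T$ ambiguity of a rank-one factorization, which is precisely the $\Delta\to-\Delta$, $\rho\to 1-\rho$ symmetry noted after Theorem~\ref{thm:indep}; fixing the eigenvector orientation by convention selects the $+$ branch and gives $q_i=\sqrt{\rho(1-\rho)}\,\Delta_i$. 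The step I expect to be the main obstacle is making rigorous that these three independence conditions really suffice: pairwise independence among arbitrary index pairs is \emph{not} assumed, so the trio $\{1,2,3\}$ alone only fixes the magnitude and sign of $q_1$, and it is classifier $1$'s independence from all the others that must do the work of determining the entire vector $q$.
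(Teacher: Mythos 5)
Your proof is correct and takes essentially the same route as the paper's: the same three off-diagonal product relations (from classifier 1's independence with all others and the conditional independence of classifiers 2 and 3), the same elimination giving $q_1^2 = \rho(1-\rho)\Delta_1^2$, and the same propagation $q_j = \rho(1-\rho)\Delta_1\Delta_j/q_1$ to all remaining entries. The only difference is that your bookkeeping is more careful than the paper's: you make explicit the $\pm$ sign ambiguity and the nondegeneracy conditions $\rho\in(0,1)$ and $\Delta_1,\Delta_2,\Delta_3\neq 0$ needed for the divisions, which the paper's proof silently assumes.
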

\begin{proof}
Let $Q_2=qq^T+D$, and consider the entries of the first row of $Q_2(1, i) = q_1q_i$ for $i \neq  1$, where
  \begin{equation*}
    q_1q_i=\rho(1-\rho)
      \Delta_1 \Delta_i \quad \implies
      q_i=\frac{\rho(1-\rho)\Delta_1\Delta_i}{q_1},
      \quad
  \end{equation*}
  and
  \begin{equation} \label{eq:r1nec}
   q_1q_2=\rho(1-\rho)
      \Delta_1 \Delta_2, \quad
       q_1q_3=\rho(1-\rho)
      \Delta_1 \Delta_3,
   \end{equation}
  where we used the arguments of Theorem~\ref{thm:r1} for row $1$ and column $i$.  
 Similarly, since classifiers $2$ and $3$ are independent, we have
  \begin{equation}
  	q_2q_3 = \rho(1-\rho) \Delta_2 \Delta_3,\nonumber
  \end{equation}
and  combining this with Equation \eqref{eq:r1nec}, we obtain
  \[
    \frac{q_1q_2}{q_2q_3} = \frac{\Delta_1}{\Delta_3}
    \implies     \frac{q_1}{q_3} = \frac{\Delta_1}{\Delta_3}\quad \text{and} \ \quad
 q_1q_3=\rho(1-\rho)
      \Delta_1 \Delta_3   ,
  \]
  from which we obtain $q_1=\sqrt{\rho(1-\rho)} \Delta_1$ and by substituting into Equation \eqref{eq:r1nec} we find that $q_i=\sqrt{\rho(1-\rho)}\Delta_i$. In like, ensembles of greater than three base classifiers can be identically solved, which completes the proof.
\end{proof}

In this section, we have elucidated the necessary and sufficient conditions for estimating the performance of each classifier without knowledge of each sample's class label (Theorems~\ref{thm:r1} and~\ref{thm:r1suf}).  Moreover, we have shown how we can use the covariance matrix in order to estimate a vector whose entries are proportional to the performance of the individual methods.  We proceed by developing an optimal aggregation method that incorporates the estimated performances for constructing an ensemble classifier.


\subsection{Strategy for Unsupervised Multiple Method Aggregation: SUMMA}
In this section we develop an optimal meta-learner to infer each sample's latent class.  As in the seminal work of \citep{AppStats1979Dawid} and subsequent work by others \citep{IER1982Nitzan,parisi2014ranking} the inference method amounts to maximizing the likelihood of the class label given a set of observations.  These being the rank predictions of each conditionally independent base classifier, which amounts to
\begin{align}
  \sigma_k^{\text{MLE}} &= \argmax{\sigma_k}{
      \sum_{i=1}^M \log \left( P_i\left(r_k | \sigma_k \right)\right)
    }. \nonumber
\end{align}
And, by application of Bayes' Theorem the MLE can be equivalently written as,
\begin{align}\label{eq:max_like}
  \sigma_k^{\text{MLE}} &= \Theta \left\{
      M\log\left(\frac{N_0}{N_1}\right) +
      \sum_{i=1}^M \log \left(
        \frac{P_i\left(\sigma_k=1 | r_k\right)}{1-P_i\left(\sigma_k=1 | r_k\right)}
      \right)
    \right\}.
\end{align}
Here, each base classifier's performance is incorporated in the respective conditional distribution.  Intuitively, the log ratio of accurate methods contributes more to the sum than do inaccurate methods.  For example, the log ratio of an uninformative method, $P_i(\sigma = 1 | r) = 0.5$, will not contribute any mass to the sum, while the contribution of methods diverge as $P_i(\sigma=1|r)$ goes to 1.

A challenge to implementing this strategy is that the conditional probability distributions are {\it a priori} unknown.  Rather than assuming that the rank predictions are distributed according to an arbitrary distribution, we {\it infer} a distribution that requires the minimum set of {\it a priori} assumptions while reproducing the ``known" statistical quantities.  The maximum entropy methodology provides the necessary tools for such a task \citep{PhysRevLett1957Jaynes}.
%
%
\begin{lemma}\label{lem:prob}
  The maximum entropy probability distribution of the latent class label given rank for the $i^{th}$ ensemble member, is
  \begin{align}\nonumber
    P_i(\sigma=1 | r) &= \frac{1}{
        1+\ex{
          \frac{3\Delta_i}{\avg{r}^2}
            \left(
            r - \avg{r}
            \right) +
          \log\left(\frac{N-N_1}{N_1}\right)
        }
      }.
  \end{align}
  where $N_1$ of $N$ samples belong to class 1.
\end{lemma}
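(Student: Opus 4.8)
The plan is to obtain the posterior $P_i(\sigma=1\mid r)$ indirectly: first derive the two class-conditional rank distributions $P_i(r\mid\sigma=0)$ and $P_i(r\mid\sigma=1)$ by the maximum entropy principle, and then invert them with Bayes' theorem. For a fixed class $\sigma$, the only ``known'' statistic beyond normalization is the conditional mean rank $\condavg{r}{\sigma}_i$, which enters the performance measure $\Delta_i$ through Definition~\ref{def:delta}. Maximizing the Shannon entropy of $P_i(r\mid\sigma)$ over the bounded rank support $\{1,\dots,N\}$ subject to $\sum_r P_i(r\mid\sigma)=1$ and $\sum_r r\,P_i(r\mid\sigma)=\condavg{r}{\sigma}_i$ yields, by the standard Lagrange-multiplier argument, a truncated (discrete) exponential family $P_i(r\mid\sigma)\propto \ex{-\lambda_\sigma r}$, where the multiplier $\lambda_\sigma$ is fixed by the mean constraint.

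The next step is to express $\lambda_\sigma$ in closed form. Informative-but-weak base classifiers deviate only slightly from the uniform law $\mathcal{U}(1,N)$, so I would linearize the mean $\mu(\lambda)=\sum_r r\,\ex{-\lambda r}/\sum_r \ex{-\lambda r}$ about $\lambda=0$. Since $\mu(0)=\avg{r}$ and $\mu'(0)=-\mathrm{Var}(r)$, and the rank variance equals $(N^2-1)/12\approx \avg{r}^2/3$ by the diagonal computation in Theorem~\ref{thm:r1}, this gives $\lambda_\sigma\approx 3(\avg{r}-\condavg{r}{\sigma}_i)/\avg{r}^2$. Subtracting the two multipliers and invoking Definition~\ref{def:delta} then produces the clean identity $\lambda_0-\lambda_1 = 3(\condavg{r}{\sigma=1}_i-\condavg{r}{\sigma=0}_i)/\avg{r}^2 = -3\Delta_i/\avg{r}^2$, which is exactly the coefficient appearing in the statement.

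Finally I would assemble the posterior. Writing each conditional as $P_i(r\mid\sigma)\propto\ex{-\lambda_\sigma r}$ and carrying the leading normalization correction shows $P_i(r\mid\sigma)\approx \tfrac{1}{N}\ex{-\lambda_\sigma(r-\avg{r})}$, so the shift to $r-\avg{r}$ is supplied by the partition function rather than inserted by hand. The likelihood ratio is then $P_i(r\mid\sigma=0)/P_i(r\mid\sigma=1)=\ex{(3\Delta_i/\avg{r}^2)(r-\avg{r})}$, and Bayes' theorem with prior $\rho=N_1/N$ gives
\begin{align}
  P_i(\sigma=1\mid r) &= \frac{1}{1+\frac{1-\rho}{\rho}\,\frac{P_i(r\mid\sigma=0)}{P_i(r\mid\sigma=1)}}. \nonumber
\end{align}
Substituting the ratio and using $\log\frac{1-\rho}{\rho}=\log\frac{N-N_1}{N_1}$ collapses the denominator to the advertised exponential, completing the derivation.

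The main obstacle is the bookkeeping around the truncated-exponential normalization: I must verify that, to the order of the near-uniform expansion, the ratio of partition functions contributes precisely the $-\lambda_\sigma\avg{r}$ term needed to replace $r$ by $r-\avg{r}$, and that the continuum/large-$N$ approximations ($\mathrm{Var}(r)\approx\avg{r}^2/3$, $N\approx 2\avg{r}$) are mutually consistent. Controlling these approximations — rather than the Lagrangian or Bayes steps, which are routine — is where the care is required.
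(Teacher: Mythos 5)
Your derivation is correct, and it reaches the lemma by a genuinely different route than the paper. The paper applies maximum entropy \emph{directly to the posterior} $P_i(\sigma|r)$: it sets up a single Lagrangian with three constraints (normalization, class-1 mass $N_1$, and the conditional mean rank encoding $\Delta_i$), obtains the logistic form exactly from stationarity, and only then approximates the multipliers by Taylor-expanding the logistic $f_i(r)$ to first order about $\avg{r}$, using $\avg{\delta r^2}=\avg{r}^2/3$. You instead solve \emph{two separate} maximum entropy problems for the class-conditional likelihoods $P_i(r|\sigma)\propto \ex{-\lambda_\sigma r}$, fix each multiplier by the standard exponential-family linearization $\mu'(0)=-\mathrm{Var}(r)\approx -\avg{r}^2/3$, and then assemble the posterior via Bayes' theorem, with the prior odds supplying the $\log\bigl(\tfrac{N-N_1}{N_1}\bigr)$ offset. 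The two constructions are not formally the same optimization problem (conditional entropy of $\sigma$ given $r$ versus entropies of $r$ given $\sigma$), but they coincide at the order both arguments work to, because both rest on the same weak-classifier/near-uniform expansion and the same uniform-variance identity. Your route buys transparency: the prevalence term is visibly the Bayes prior rather than a Lagrange multiplier, the linearization is the textbook mean-parameter expansion rather than an ad hoc expansion of a sigmoid inside integral constraints, and your partition-function bookkeeping ($Z_\sigma\approx N\ex{-\lambda_\sigma\avg{r}}$, which converts $r$ into $r-\avg{r}$) is exactly right --- one can even check consistency, since $\rho\lambda_1+(1-\rho)\lambda_0=0$ guarantees the mixture marginal of $r$ stays uniform to first order. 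What the paper's route buys in exchange is that the logistic functional form of the posterior is exact (only the multipliers are approximate), and the prevalence constraint is enforced within the variational problem itself rather than imposed afterwards through the prior.
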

\begin{proof}
Consider a method that rank orders $N$ samples, as $N$ gets very large the difference between the statistical moments of the discrete rank distribution and its continuous analog become smaller. We begin by taking $N$ sufficiently large that the maximum entropy distribution is approximately inferred by the maximum entropy functional,
  \begin{align}\nonumber
    J &= -\int_{r=0}^N \sum_{\sigma\in\set{0, 1}} P(r) P_i(\sigma| r)\log(P_i(\sigma| r))dr + \sum_{i=0}^2 \lambda_i\Lambda_i
  \end{align}
in which we aim to infer $P_i(\sigma | r)$ subject to the constraints $\Lambda_i$ for $i=(0, 1, 2)$, where
  \begin{subequations}
    \begin{align}
      \Lambda_0 &= N - \int_{r=0}^N\sum_{\sigma\in\set{0, 1}}
        P_i(\sigma|r), \label{con:norm}\\
      \Lambda_1 &= N_1 - \int_{r=0}^N\sum_{\sigma\in\set{0, 1}}
        \sigma P_i(\sigma|r), \label{con:mass_conserv}\\
      \Lambda_2 &= N_1 \condavg{r}{\sigma=1}_i - \int_{r=0}^N r\sum_{\sigma\in\set{0, 1}}
        \sigma P_i(\sigma| r), \qquad \text{or equivalently }\nonumber\\
        &= N_1 \left(\condavg{r}{\sigma=0}_i - \Delta_i\right) - \int_{r=0}^N r\sum_{\sigma\in\set{0, 1}}
          \sigma P_i(\sigma| r).\label{con:avg}
    \end{align}
  \end{subequations}
  Here we constrain the inferred function such that the sum over labels normalizes to 1 (Equation~\ref{con:norm}), the average occurrence of class one samples is reflective of its occurrence in the population ($N_1$) (Equation~\ref{con:mass_conserv}), and lastly that the average rank condition by the class label recovers our empirical estimate of $\Delta_i$ (Equation~\ref{con:avg}).

  We consider the criterion satisfied when the variation of the Lagrangian functional, $\delta J$, with respect to $P_i(\sigma|r)$ is stationary,
  \begin{align}\nonumber
    \delta J &= \int_{r=0}^N\sum_{\sigma=0}^1 \left[
      -\frac{1}{N}\log\left(P_i(\sigma|r)\right) - \frac{1}{N} -
      \lambda_0 -
      \lambda_1 \sigma -
      \lambda_2 \sigma r
      \right] \delta P_i(\sigma|r) = 0
  \end{align}
  and consequently,
  \begin{align}\label{eq:maxent}
    P_i(\sigma | r) &= \frac{1}{Z} e^{-\lambda_1\sigma - \lambda_2\sigma r}.
  \end{align}
  In Equation~\eqref{eq:maxent}, $\lambda_0$ is incorporated in the constant $Z$, which satisfies the normalization constraint in Equation~\eqref{con:norm}.  Therefore, $Z = 1+e^{-\lambda_1 - \lambda_2 r}$.

  Next, we compute the remaining Lagrange multipliers by substituting in Equation~\eqref{eq:maxent} into Equations~(\ref{con:mass_conserv}, \ref{con:avg}).  Each of these equations, after summing over class labels, amount to calculating integrals over $P_i(\sigma=1|r) = (1 + e^{\lambda_1 + \lambda_2 r})^{-1}$, which for transparency we will refer to as $f_i(r)$.

	To achieve interpretable and analytic solutions we approximate these integrals by expanding $f_i$ to first order about $\avg{r}$, resulting in $f_i(r) \approx f_i(\avg{r}) + f_i'(r)|_{r =\avg{r}} \delta r$, with $f_i'(r) = -\lambda_2 f_i(r)(1-f_i(r))$.  Indeed, the first order expansion captures our intuition from the maximum entropy distribution in Equation~\ref{eq:maxent}.  To see this, consider an uninformative base classifier, that is it ranks samples without regard to their latent class label.  Such a classifier would simply model the data by the uniform distribution over $N$, i.e. $\lambda_2 = 0$. Instead, now consider a base classifier $f_i(r)$ that is weakly predictive, $0 < \lambda_2 \ll 1$, intuitively the form of such a base classifier is captured by a shallow and negatively sloped line about the mean.  Indeed, in applying this intuition we find that the first order approximation to Equation~\ref{con:mass_conserv} is,
\begin{align}
\Lambda_1 &\approx N_1 - N f_i(\avg{r})\left[
1 + \lambda_2\left(1 - f_i(\avg{r})\right) \avg{\delta r}
\right] \nonumber
\end{align}
where $\avg{\delta r} = 0$.  The expansion then results in two useful forms of Equation~\ref{con:mass_conserv},
\begin{subequations}
\begin{align}
\frac{N_1}{N} &= f_i(\avg{r})\label{con:simp_f}\\
\log\left(\frac{N - N_1}{N_1}\right) &= \lambda_1+\lambda_2 \avg{r}.\label{con:mass_conserv_simp}
\end{align}
\end{subequations}
In like fashion, we find that applying our first order approximation to Equation~\ref{con:avg} amounts to,
\begin{align}
\Lambda_2 &= N_1 \big(\condavg{r}{\sigma=0}_i - \Delta_i\big) - 
N f_i(\avg{r})\big[
\avg{r} - \lambda_2(1 - f_i(\avg{r}))\avg{\delta r^2}
\big] \label{con:simp_avg}
\end{align}
where, the second central moment, $\avg{\delta r^2}$, is simply that of a uniform distribution, $\avg{\delta r^2} = (N^2)/12$, or equivalently $\avg{\delta r^2}= \avg{r}^2/3$.  Then, by substituting the second central moment and Equation~\ref{con:simp_f} into Equation~\ref{con:simp_avg} we find that,
\begin{align}
\Lambda_2 &= N_1\left(
\condavg{r}{\sigma=0}_i - \Delta_i - \avg{r} + \lambda_2\left(\frac{N-N_1}{N}\right) \frac{\avg{r}^2}{3}
\right).
\end{align}
Recognizing that $\Lambda_2 = 0$, we see that an algebraic solution for $\lambda_2$ is feasible.  If we rearrange the terms,
\begin{align}
\Delta_i + \avg{r} - \condavg{r}{\sigma=0}_i &= \lambda_2 \left(\frac{N-N_1}{N}\right)\frac{\avg{r}^2}{3}
\end{align}
and make the substitution $\avg{r} = \rho\condavg{r}{1}_i + (1-\rho)\condavg{r}{0}_i$ where $\rho = N_1/N$,
\begin{align}
\Delta_i(1-\rho) &= \lambda_2(1-\rho)\frac{\avg{r}^2}{3}.
\end{align}
Solving the above equation for $\lambda_2$, and applying the solution to Eq~\ref{con:mass_conserv_simp} we find that,
  \begin{align}
    \lambda_1 &= -\frac{3\Delta_i}{\avg{r}} + \log\left(\frac{N-N_1}{N_1}\right),\nonumber\\
    \lambda_2 &= \frac{3\Delta_i}{\avg{r}^2}.\nonumber
  \end{align}
While the linear approximation was used for the estimation of the Lagrange multipliers, the distribution $P(\sigma |r)$ is still the logistic function of Equation~\ref{eq:maxent}.  The proof is completed by substituting $\lambda_1$ and $\lambda_2$ into $P_i(\sigma=1|r)$.
\end{proof}
Next we apply the maximum entropy probability distribution to derive a maximum likelihood estimator of each sample's latent class label.
%
%
\begin{theorem}\label{theorem:mle}
  With all the definitions of previous sections the maximum likelihood estimator (MLE) of the $k^{th}$ sample's latent class label $\sigma_k$ is given as,
  \begin{align}
    \hat{\sigma}_k^{SUMMA} &= \Theta\left\{
      \sum_{i=1}^M v_i \left(
        \avg{r} - r_{ik}
      \right)
    \right\},\nonumber
  \end{align}
  with $\Theta$ representing the Heaviside step function.  We denote the estimated class label SUMMA, as it is estimated by the Strategy for Unsupervised Multiple Method Aggregator.
\end{theorem}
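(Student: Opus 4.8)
The plan is to substitute the maximum entropy conditional distribution of Lemma~\ref{lem:prob} directly into the maximum likelihood estimator of Equation~\eqref{eq:max_like} and simplify. The first step is to compute the log-odds ratio appearing in the sum. Since Lemma~\ref{lem:prob} writes $P_i(\sigma=1 | r)$ as a logistic function $(1+\ex{z_i})^{-1}$ with exponent $z_i = \frac{3\Delta_i}{\avg{r}^2}(r-\avg{r}) + \log\left(\frac{N-N_1}{N_1}\right)$, the odds collapse to $P_i/(1-P_i) = \ex{-z_i}$, so the log-odds ratio is simply $-z_i$. Explicitly,
\begin{align}
\log\left(\frac{P_i(\sigma_k=1 | r_k)}{1-P_i(\sigma_k=1 | r_k)}\right) = -\frac{3\Delta_i}{\avg{r}^2}\left(r_{ik}-\avg{r}\right) - \log\left(\frac{N-N_1}{N_1}\right).\nonumber
\end{align}

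Next I would substitute this into Equation~\eqref{eq:max_like}. The key observation is that $N-N_1 = N_0$, so the constant term $\log\left(\frac{N_0}{N_1}\right)$ appears exactly $M$ times in the summation and cancels the leading $M\log\left(\frac{N_0}{N_1}\right)$ term. What remains inside the Heaviside argument is
\begin{align}
\sum_{i=1}^M \frac{3\Delta_i}{\avg{r}^2}\left(\avg{r}-r_{ik}\right).\nonumber
\end{align}
I would then exploit the positive-scale-invariance of the Heaviside function, $\Theta(c\,x)=\Theta(x)$ for any $c>0$. Because each rank vector $r_i$ is a permutation of $\set{1,\dots,N}$, the mean rank $\avg{r}=(N+1)/2$ is identical for every method, so $3/\avg{r}^2$ is a single global positive constant that can be pulled out and discarded. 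The argument is thus proportional to $\sum_i \Delta_i(\avg{r}-r_{ik})$, and by the definition of the performance vector in Equation~\eqref{eq:eigvector} we have $\Delta_i = v_i\sqrt{\sum_j \Delta_j^2}$ with the common positive factor $\sqrt{\sum_j \Delta_j^2}$; applying scale-invariance once more replaces $\Delta_i$ with $v_i$ and yields $\hat{\sigma}_k^{SUMMA}$ as claimed.

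The computation is routine; the conceptual point I would take most care to justify is the legitimacy of replacing the unknown $\Delta_i$ by the data-estimable $v_i$. This rests on two facts that must both hold. First, $\avg{r}$ is method-independent (guaranteed by the rank transform), so the per-method factor $3/\avg{r}^2$ is genuinely one global constant rather than an $i$-dependent reweighting of the terms; were $\avg{r}$ to vary across methods, this factor could not be extracted and the reduction would break. Second, $v_i$ and $\Delta_i$ differ only by a single global positive scalar, so the \emph{sign} of the aggregated score --- the only feature the Heaviside function responds to --- is preserved under the substitution. I would therefore frame the proof around these two invariances, since the remaining algebra is an immediate consequence of the logistic form established in Lemma~\ref{lem:prob}.
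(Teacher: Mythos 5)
Your proposal is correct and follows essentially the same route as the paper's proof: substitute the logistic form from Lemma~\ref{lem:prob} into Equation~\eqref{eq:max_like}, cancel the $M\log(N_0/N_1)$ prior terms, and discard the global positive factors $3/\avg{r}^2$ and $\sqrt{\sum_j \Delta_j^2}$ (the paper writes this factor as $\sqrt{\lambda/(\rho(1-\rho))}$ via Equations~\eqref{eq:lambda} and~\eqref{eq:eigvector}, which is the same quantity) under the Heaviside function. If anything, your explicit justification that $\avg{r}=(N+1)/2$ is method-independent and that the prior terms cancel exactly is slightly more careful than the paper's own presentation.
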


\begin{proof}
  Applying Lemma~\ref{lem:prob} to the our maximum Likelihood estimator in Equation~\eqref{eq:max_like},
  \begin{align}
    \sigma_k^{\text{MLE}} &= \Theta \left\{
    M\log\left(\frac{N_0}{N_1}\right) -
    \sum_{i=1}^M
      \frac{3\Delta_i}{\avg{r}^2} \left(
        r_{ik} - \avg{r}
      \right)
       + \log\left(\frac{N-N_1}{N_1}\right)
      \right\}\nonumber
  \end{align}
	or equivalently,
	\begin{align}
		\sigma_k^{\text{MLE}} &= \Theta\left\{
				\frac{3}{\avg{r}^2} \sum_{i=1}^M \Delta_i\left(\avg{r} - r_{ik}\right)
			\right\}\nonumber
	\end{align}
  and recall from Equations~(\ref{eq:lambda}, \ref{eq:eigvector}) that $\Delta_i = \sqrt{\frac{\rho(1-\rho)}{\lambda}} v_i$.  Then by substitution,
  \begin{align}
    \sigma_k^{\text{MLE}} &= \Theta \left\{
    \frac{3}{\avg{r}^2}\sqrt{\frac{\lambda}{\rho(1-\rho)}}\sum_{i=1}^M
      v_i \left(
        \avg{r} - r_{ik}
      \right)
    \right\} \label{eq:MLE1}
  \end{align}
	where the terms preceeding the sum have no influence to the image of the argument under the Heaviside step function, and consequently may be ignored.  This result completes the proof.
\end{proof}


\section{Methods for Estimating Performances from the Covariance Matrix}
The center piece of the SUMMA algorithm is that the off-diagonal elements of covariance matrix of ranked predictions are those of a rank one matrix ($R$).  Moreover, the eigenvector elements of $R$ are proportional to our performance metric $\Delta$.  Despite the elegance of our solution, we have yet to demonstrate how to estimate $R$.  Indeed, spectral decomposition of the covariance matrix does not produce the vector we are seeking, because $Q_2(i,i) \neq R(i,i)$ $\forall i$.  Consequently, we are required to {\it infer} the diagonal of $R$, which then yields the vector of interest.  In the following sub-sections we present two methods to infer the diagonal entries of $R$.  Our first approach formulates this task as a semi-definite program (SDP) which then can be solved using any SDP solver.
Our second strategy is coming from the recent literature on the famous matrix completion problem, see e.g. \citep{candes2009exact,ha2017alternating,cai2010singular,jain2010guaranteed}. It is an iterative method that only requires us to calculate the largest singular value and associated singular vector at each iteration. Together, these strategies provide the reader with i) methods for estimating the values proportional to the performance metric $\Delta$ and ii) guarantees that the numerical solutions are relevant.


\subsubsection{Semi-Definite Programming}\label{app:sdp}
In this section, we estimate the rank-1 matrix $R = \lambda vv^T$ from the covariance matrix $Q_2$ using convex optimization. To start, recall that the covariance matrix is of the form $Q_2 = R + D$, where $D$ is a diagonal matrix.
We would like to estimate $D$ from the covariance matrix $Q_2$ such that $Q_2-D$ is equal to $R$.  Mathematically, we may formalize the problem as follows,
\begin{eqnarray} \label{optim:greedy}
  \min_{D} rank(Q_2 - D) \quad \text{s.t.} \quad
			D(i,j) = 0 \quad \text{for} \quad i \neq j, \quad
			\text{and} \quad Q_2 - D\succeq 0,
\end{eqnarray}
where  $Q_2 - D \succeq 0$ means that the matrix $Q_2 - D$ is positive semidefinite. The positive semi-definite constraint is required from our theoretical results - the eigenvalue corresponding to $R$ is positive, see equation \eqref{eq:lambda}. Next, we will show that under the assumption of positive semi-definiteness of $R$ the corresponding diagonal matrix $D$ is unique.

%

%
%
\begin{theorem} \label{thm:rankmin}
	Let $Q$ be a $M \times M$ approximately rank-1 symmetric matrix,
	\begin{equation} \label{eq1}
	  Q= qq^T + D_0,
	\end{equation}
	for some $q \in \R^m$ such that $m \geq 3$ and diagonal matrix $D_0$.  Then the optimization problem in Equation \eqref{optim:greedy} has the unique solution $D_0$, with $Q-D_0= qq^T =(-q)(-q)^T$ so that we can recover $q$ up to its sign.
\end{theorem}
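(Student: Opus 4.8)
The plan is to show that feasibility already pins the diagonal down uniquely, by first exhibiting $D_0$ as an optimum and then proving that every optimum coincides with it. First I would verify that $D = D_0$ is feasible and attains rank one: since $Q - D_0 = qq^T \succeq 0$ and $q \neq 0$, the matrix $qq^T$ is positive semidefinite of rank exactly one, so the optimal value of \eqref{optim:greedy} (with $Q$ in place of $Q_2$) is at most one. It cannot be zero, because a rank-zero feasible point would force $Q$ itself to be diagonal, i.e. $q_i q_j = 0$ for all $i \neq j$, which is impossible as soon as $q$ has two nonzero coordinates. Hence the minimum rank equals one, and $D_0$ attains it.

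Next I would characterize every optimal point. A symmetric matrix is positive semidefinite of rank at most one precisely when it can be written as $ww^T$ for some $w \in \R^m$; therefore any feasible diagonal $D$ achieving the minimum satisfies $Q - D = ww^T$. Comparing off-diagonal entries, which the diagonal correction $D$ leaves untouched, yields the bilinear system $w_i w_j = q_i q_j$ for all $i \neq j$.

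The crux of the argument, and the step I expect to be the main obstacle, is deducing $w = \pm q$ from this system; this is exactly where the hypothesis $m \geq 3$ enters (through the presence of at least three coordinates with $q_i \neq 0$). I would fix three such indices $a, b, c$ and form the ratio $(w_a w_b)(w_a w_c)/(w_b w_c) = w_a^2$, which equals $(q_a q_b)(q_a q_c)/(q_b q_c) = q_a^2$, so $w_a = \pm q_a$, and likewise for every coordinate on the support of $q$. Substituting back into $w_i w_j = q_i q_j$ forces all these signs to agree, giving $w_i = \epsilon q_i$ for a single global $\epsilon \in \{-1, +1\}$, while any coordinate with $q_j = 0$ gives $w_j w_a = 0$ with $w_a \neq 0$, hence $w_j = 0$; thus $w = \pm q$. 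I would also remark that with only two nonzero coordinates the constraint $w_a w_b = q_a q_b$ is a single equation in two unknowns and leaves $w_a^2$ undetermined, which is precisely why fewer than three informative entries would make the diagonal non-identifiable.

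Finally, uniqueness of $D$ follows at once: from $w = \pm q$ we obtain $w_i^2 = q_i^2$ for every $i$, so matching the diagonal of $Q - D = ww^T$ gives $D_{ii} = Q_{ii} - w_i^2 = Q_{ii} - q_i^2 = (D_0)_{ii}$, i.e. $D = D_0$. Consequently $Q - D_0 = qq^T = (-q)(-q)^T$, recovering $q$ up to an overall sign, as claimed.
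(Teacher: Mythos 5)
Your proof is correct, and its skeleton matches the paper's: exhibit $D_0$ as a rank-one feasible point, note that a rank-zero feasible point is impossible, then show that any other optimal $D$ must satisfy $Q - D = ww^T$ (using PSD together with rank one), and reduce uniqueness to the off-diagonal system $w_iw_j = q_iq_j$ for $i \neq j$. The difference lies in how that bilinear system is solved, and here your execution is genuinely better. The paper argues by contradiction: assuming WLOG $q_1 > \hat q_1$, it infers $\hat q_2 > q_2$ and $\hat q_3 > q_3$ from the $(1,2)$ and $(1,3)$ equations and then contradicts the $(2,3)$ equation; that chain of inequalities is only valid when the relevant entries are positive (from $q_1q_2 = \hat q_1 \hat q_2$ and $q_1 > \hat q_1 > 0$ one gets $\hat q_2 > q_2$ only if $q_2 > 0$), and it tacitly assumes no coordinate of $q$ vanishes. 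Your ratio argument $w_a^2 = (w_aw_b)(w_aw_c)/(w_bw_c) = q_a^2$ is sign-agnostic, your sign-consistency step makes the conclusion $w = \pm q$ explicit, and you treat coordinates with $q_j = 0$ separately ($w_jw_a = 0$ with $w_a \neq 0$ forces $w_j = 0$), none of which the paper does. You also identify the hypothesis that is actually needed: at least three \emph{nonzero} coordinates of $q$, not merely $m \geq 3$. Your closing remark is in fact a counterexample to the theorem as literally stated: for $q = (1,1,0)$ one may take $\hat q = (t, 1/t, 0)$ for any $t \neq 0$, producing infinitely many feasible rank-one decompositions, so identifiability genuinely fails with only two informative entries. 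In short, you take the same route as the paper but execute the key identifiability step rigorously where the paper's version has sign and zero-coordinate gaps.
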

\begin{proof}
	Let $Q$ be defined as in Equation~\eqref{eq1} with $q \neq 0$ and $D_0$.  With $q\neq0$, it is obvious that for any diagonal matrix $D$, $Q - D \neq 0$.  Hence, $rank(Q - D) > 0$ for any diagonal matrix $D$.  Moreover, since $Q - D_0 = qq^T$ one possible solution of the optimization problem in Equation~\eqref{optim:greedy} is $D_0$.  Next, we show that $D_0$ is the only diagonal matrix that satisfies the feasibility conditions of the optimization problem.

Suppose there exists a diagonal matrix $D_1$ such that $D_1 \neq D_0$ and $rank(Q - D_1) = 1$.  Then since $Q - D_1$ is symmetric, there exists $\hat{q}$ such that $Q - D_1 = \hat{q} \hat{q}^T$ and $q \neq \hat{q}$.  Since both $D_0$ and $D_1$ are diagonal and $D_1 + \hat{q}\hat{q}^T = Q = D_0 + qq^T$, then the equality
	\begin{equation} \label{eq:offdiag}
  	q_iq_j = \hat{q}_i \hat{q}_j
	\end{equation}
	must be true for all $i \neq j$ where $i,j \in \set{1, 2, \dots, M}$.

	Without loss of generality lets assume that $q_1 > \hat{q}_1$, and $M \geq 3$.  Equation~\eqref{eq:offdiag} for $(i, j) = (1, 2)$ implies that $\hat{q}_2 > q_2$, and in like, for $(i, j) = (1, 3)$ implies that $\hat{q}_3 > q_3$.  Under these inequalities, when $(i, j) = (2, 3)$ we see that $\hat{q}_2 \hat{q}_3 > q_2 q_3$, which contradicts equation~\eqref{eq:offdiag}.  Therefore, $q=\hat{q}$ making $D_0 = D_1$ and consequently $D_0$ is the unique solution to the optimization problem, equation~\eqref{optim:greedy}.
\end{proof}

Note that Theorem~\ref{thm:rankmin} shows that $q$ can be recovered uniquely up to its sign. Without any further assumptions, it is impossible to determine the sign of each coordinate of $q$ since $qq^T=(-q)(-q)^T$. In this manuscript, we assume that the majority of base classifiers have adopted the right sign convention.  Consequently, we solve the ambiguity between $q$ and $-q$ based upon which have the greatest number of positive entries.

While the optimization problem described in \eqref{optim:greedy} is intuitive, in practice it is difficult to solve. This is because the rank function is not convex \citep{candes2009exact}, making the optimization NP-hard \citep{jain2010guaranteed}. To circumvent this shortcoming, we chose to optimize the convex relaxation of the rank function, namely the nuclear norm, as is done in the matrix completion literature \citep{candes2010power}. For a given matrix $A$ the nuclear norm is defined as,
\begin{equation} \label{def:nuclear}
	\norm{A}_*=\sum_{i=1}^M \sigma_i(A),
\end{equation}
where $\sigma_i(A)$ represents the $i^{th}$ singular value of $A$.  Accordingly, our original optimization problem can be relaxed to the following semidefinite program,
\begin{eqnarray} \label{optim2}
  \min_{D} \quad \norm{Q-D}_* \quad
			\text{s.t.} \quad D \text{ is diagonal, and } \quad
			Q-D \succeq 0.
\end{eqnarray}
Semidefinite programming has been a popular approach recently with the availability of efficient solvers and is used in a variety of disciplines including signal processing and control theory, \citep{vandenberghe1999applications}.  Next we characterize the solutions of the semidefinite program in Equation~\eqref{optim2}.
%
%
\begin{theorem} \label{thm:2}
	Suppose that $Q$ is an $M \times M$ matrix of the form Equation~\eqref{eq1} for some $q$ and diagonal matrix $D_0$, then	the optimization problem in Equation \eqref{optim2} has the unique solution $D_0$, provided that
	\begin{equation} \label{condition}
	q_i^2<\sum_{j\neq i}q_j^2, \quad \forall i \in \set{1, 2, \dots, M}
	\end{equation}
	Moreover, $Q - D_0=qq^T$ so that we can recover $q$ up to its sign.
\end{theorem}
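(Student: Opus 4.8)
The plan is to exploit the positive semidefinite constraint to collapse the nuclear norm into a trace. On the feasible set we have $Q-D\succeq0$, so $Q-D$ is symmetric positive semidefinite and its singular values coincide with its nonnegative eigenvalues; hence $\norm{Q-D}_*=\text{tr}(Q-D)=\text{tr}(Q)-\text{tr}(D)$ for every feasible $D$. Minimizing the nuclear norm is therefore \emph{maximizing} $\text{tr}(D)$. First I would substitute $D=D_0+E$ with $E$ diagonal, so that feasibility reads $qq^T-E\succeq0$ and the objective becomes $\text{tr}(E)$ up to the constant $\text{tr}(D_0)$. Since $E=0$ is feasible (because $qq^T\succeq0$), it suffices to show that every diagonal $E$ with $A:=qq^T-E\succeq0$ satisfies $\text{tr}(E)\le0$, with equality only at $E=0$.

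The engine of the proof is a positive semidefinite \emph{certificate} $S$ with unit diagonal and $q$ in its kernel. I would build it as a Gram matrix. The hypothesis $q_i^2<\sum_{j\neq i}q_j^2$ implies, via $\sum_{j\neq i}q_j^2\le\bigl(\sum_{j\neq i}|q_j|\bigr)^2$, the generalized polygon inequality $|q_i|<\sum_{j\neq i}|q_j|$ for every $i$. By the polygon-closing lemma there then exist unit vectors $u_1,\dots,u_M$ (realizable already in the plane) with $\sum_j q_j u_j=0$. Setting $S_{ij}=\langle u_i,u_j\rangle$ yields $S\succeq0$, $S_{ii}=1$, and $(Sq)_i=\langle u_i,\sum_j q_j u_j\rangle=0$, i.e. $Sq=0$.

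With this certificate the bound is immediate: for any feasible $E$, both $S$ and $A=qq^T-E$ are positive semidefinite, so $0\le\text{tr}(SA)=q^TSq-\text{tr}(SE)=0-\sum_i S_{ii}e_i=-\text{tr}(E)$, giving $\text{tr}(E)\le0$. Thus $E=0$ (that is, $D=D_0$) attains the maximum of $\text{tr}(D)$, so $D_0$ minimizes the nuclear norm. For uniqueness I would observe that equality forces $\text{tr}(SA)=0$; since $S,A\succeq0$ this implies $SA=0$ (because $\text{tr}(S^{1/2}AS^{1/2})=0$ with $S^{1/2}AS^{1/2}\succeq0$ makes it vanish, whence $A^{1/2}S^{1/2}=0$ and $AS=0$). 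Then $0=SA=S(qq^T-E)=-SE$ using $Sq=0$, so $S_{jj}e_j=0$ for every $j$; as $S_{jj}=1$ this gives $E=0$. The identity $Q-D_0=qq^T=(-q)(-q)^T$ then recovers $q$ up to its sign, exactly as in Theorem~\ref{thm:rankmin}.

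The hard part will be the certificate construction: namely invoking the polygon-closing lemma and, crucially, checking that the \emph{squared} hypothesis $q_i^2<\sum_{j\neq i}q_j^2$ is precisely what guarantees the \emph{linear} polygon inequality needed to close the polygon. Once $S$ exists, the trace manipulations are routine. I would also treat the degenerate coordinates $q_j=0$ (where the corresponding $u_j$ may be chosen as any unit vector) and confirm that even a collinear, rank-one realization of $S$ still has unit diagonal, so that both the optimality bound and the uniqueness argument go through unchanged.
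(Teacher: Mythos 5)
Your proposal is correct, and after the shared opening move it follows a genuinely different route from the paper's. Both proofs begin identically: on the feasible set $Q-D\succeq 0$ the nuclear norm collapses to $\mathrm{Tr}(Q-D)$, so the problem is to maximize $\mathrm{Tr}(D)$ (equivalently, in your parametrization $D=D_0+E$, to show $\mathrm{Tr}(E)\le 0$ with equality only at $E=0$). From there the paper argues elementarily: feasibility forces $D_{ii}\le 0$, positive semidefiniteness of the $2\times 2$ submatrices yields determinant constraints of the form $D_{ii}D_{jj}\ge q_i^2q_j^2$, and a case analysis (either $D_{ii}\le -q_i^2$ for all $i$, or exactly one index violates this) combined with hypothesis \eqref{condition} shows any feasible $D\neq D_0$ has strictly larger trace; the final step is only sketched (``a generalization of the above argument''), and the printed inequality chains contain sign/direction typos. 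Your argument instead builds an SDP dual certificate: a Gram matrix $S\succeq 0$ with unit diagonal and $Sq=0$, realizable because \eqref{condition} implies $|q_i|<\sum_{j\neq i}|q_j|$ and the polygon-closing lemma then supplies unit vectors $u_j$ with $\sum_j q_ju_j=0$; the bound $0\le \mathrm{Tr}(SA)=-\mathrm{Tr}(E)$ for $A=qq^T-E$ gives optimality, and complementary slackness ($\mathrm{Tr}(SA)=0$ with $S,A\succeq 0$ forces $SA=0$, hence $SE=0$, hence $E=0$ since $S_{jj}=1$) gives uniqueness in the same stroke. What each buys: the paper's proof uses nothing beyond $2\times 2$ minors, but is incomplete as written; yours is fully rigorous and self-contained, makes transparent that \eqref{condition} is exactly the polygon-closability of the lengths $|q_j|$, and handles the degenerate coordinates $q_j=0$ cleanly. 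One small remark: your certificate argument never uses strictness of \eqref{condition} (polygon closing needs only the non-strict inequality, and uniqueness comes from $S_{jj}=1$), so your proof in fact establishes the slightly stronger statement with $\le$ in place of $<$.
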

\begin{proof}
Let $D$ be an arbitrary diagonal matrix such that $Q - D$ is a PSD (Positive Semidefinite) matrix.  Then the eigenvalues of $Q - D$ are non-negative and equal to the singular values of $Q - D$.  Combined with the fact that the trace of a matrix is equal to the sum of its eigenvalues, we know the following:
\[
\norm{Q - D}_* = Tr(Q- D).
\]
Without loss of generality lets assume that the diagonal entries of $Q$ are equal to 0, that is $Q = qq^T - diag(qq^T)$.  In this case, note that $D_0 = -diag(qq^T)$ is a feasible solution for the optimization problem in Equation \eqref{optim2}.  Next let $D$ be an arbitrary solution and since $Q-D$ is PSD, $D_{ii}\leq 0$ for all $i$.  Suppose for some $i$ we have $D_{ii} > -q_i^2$ and for all $j \neq i$ we have	$D_{jj} \geq -q_j^2$. For $j \neq i$, consider the following sub-matrix of $Q-D$,
	\[
		(Q-D)^{ij}=\begin{bmatrix}
		-D_{ii} & q_i q_j \\
		q_iq_j & -D_{jj}
		\end{bmatrix}.
	\]
Since $det((Q-D)^{ij})=D_{ii}D_{jj}-q_i^2q_j^2<0$, the submatrix $(Q-D)^{ij}$ of $Q-D$ is negative definite	which contradicts the fact that $Q-D$ is PSD.  Therefore, combined with the fact that $D_{ii} \leq 0$ for each $i$, this implies that in order a diagonal matrix $D$	to be a feasible point for the optimization problem \eqref{optim2}
	\begin{enumerate}
		\item Either for all $i$, we have $D_{ii} \leq -q_i^2$
		\item Or there exist $i$ such that  $D_{ii} > - q_i^2$, and
		$\forall j \neq i$ $D_{jj} \leq - q_j^2$.
	\end{enumerate}
Suppose $D$ is a feasible point that satisfies condition $1$, then
$Tr(Q-D)\geq Tr(Q+diag(qq^t))$ for every feasible	$D$.  Hence, $D_0=-diag(qq^t)$ remains to be the unique solution of the optimization problem.  Now suppose there exists a feasible point $D$ such that condition 2 of above is satisfied.	 Then WLO assume that $D_{11}>-q_1^2$ and $D_{jj}\leq -q_i^2$ for	$j \geq 2$. 	Next, for each $j \geq 2$ consider the following submatrix:
	\[
		(Q-D)^{1j}=\begin{bmatrix}
		-D_{11} & q_1 q_j \\
		q_1 q_j & -D_{jj}
		\end{bmatrix}.
	\]
	In order for $Q-D$ to be PSD, we should have $det((Q-D)^{1j})=D_{11}D_{jj}-q_1^2q_j^2>0$, which in turn implies that
	\begin{equation} \label{cond:1}
		D_{jj}<\frac{q_1^2q_j^2}{D_{11}}.
	\end{equation}
	Using Equation \eqref{cond:1}, we observe that
	\begin{eqnarray}
		\norm{Q-D}_*&=&Tr(Q-D)=-D_{11}-\sum_{j > 1}D_{jj} \nonumber\\
		&>&-D_{11}+\sum_{j > 1} -\frac{q_1^2q_j^2}{D_{11}} \label{eqc:1},
	\end{eqnarray}
	where Equation \eqref{eqc:1} comes from Equation \eqref{cond:1}.	 Now from the assumption that $q_1^2 <\sum_{j > 1} q_j^2$, we have $D_{11}>-q_1^2>-q_1 \sqrt{\sum_{j > 1} q_j^2}$, and for any $D_{11} \in [-q_1^2,0]$
	\begin{eqnarray}
		\norm{Q-D}_*&<&-D_{11}-\sum_{j > 1} \frac{q_1^2q_j^2}{D_{11}}
		> -q_1^2 -\sum_{j > 1} q_j^2=\norm{Q+diag(qq^t)}_* \label{eqcd:1}.
	\end{eqnarray}
	%
	%
	%
A generalization of the above argument implies that if for each $i$, $q_i^2<\sqrt{\sum_{j \neq i} q_j^2}$, then $\norm{Q-D}_*>\norm{Q+diag(qq^t)}_*$ so that $D_0=-diag(qq^t)$ the unique solution to the optimization problem \eqref{optim2}.
\end{proof}
Note that the above theorem does not hold if
\begin{equation} \label{eq:perf}
	\sum_{i \geq 2}  q_i^2 < q_1^2 .
\end{equation}
However, unless there is a near perfect classifier and some random classifiers it is very unlikely that Equation \eqref{eq:perf} is violated.

\subsubsection{An Iterative Approach}\label{app:iter}
The iterative algorithm we present in this section and its convergence results are from the matrix completion literature and can be found in the recent papers \citep{ha2017alternating,jain2010guaranteed}. Basically, as shown in \citep{ha2017alternating}, the iterative method we present converges with high probability except for some peculiar cases where only one classifier is better than random whereas the rest are random, i.e. $v_1>0,v_i=0 ~ \forall i>1$. Before presenting the iterative method, let us define the set $\Omega=\{(i,j)~:~i\neq j \in \{1,\cdots,m\} \}$.
Then, using the notation of \citep{ha2017alternating}, we can formulate the following matrix completion problem:
\begin{equation}
\min_{X} \frac{||
P_\Omega(X)-P_\Omega(Q_2) ||_F^2}{2} \quad s.t. \quad X \in C=\{X\in R^{M\times M} :rank(X)=1, X=X^T, X\succeq 0 \},\label{matrix_completion}
\end{equation}
where the linear operator $P_\Omega(X)$ is defined as
\begin{equation}
P_\Omega(X)=
\begin{cases}
      X_{ij} & if (i,j) \in \Omega \\
      0 & (i,j). \notin \Omega
   \end{cases}
\end{equation}
It is obvious that $R:=qq^t \in C$ and $||
P_\Omega(R)-P_\Omega(Q_2) ||_F^2=0$, as such $R$ is a minimizer of the optimization problem in \eqref{matrix_completion}.
Moreover, under the assumptions of Theorem~\ref{thm:rankmin}, it is clear that $R$ is the unique minimizer.
Therefore, by solving the optimization problem in \eqref{matrix_completion}, we can recover the unknown vector $q$ exactly.
The optimization problem in \eqref{matrix_completion} is exactly the matrix completion problem presented in
\citep{ha2017alternating}, where we only utilized the additional fact that the matrix $R$ we are observing has rank one. From Theorem~3 of \citep{ha2017alternating}, the following gradient descent algorithm converges to $R$:
\begin{eqnarray}
Y_t&=&X_t-\nabla \left(\frac{1}{2}||
P_\Omega(X_t)-P_\Omega(Q_2) ||_F^2\right) \nonumber \\
X_{t+1} &=& P_C(Y_t), \label{iterative_v1},
 \end{eqnarray}
where $P_C(Y_t)$ is the projection of $Y_t$ into the set $C$. First note that
\[
\nabla \left(\frac{1}{2}||P_\Omega(X_t)-P_\Omega(Q_2) ||_F^2 \right)=P_\Omega(X_t)-P_\Omega(Q_2),
\]
and from Eckart - Young - Mirsky Theorem \citep{eckart1936approximation}, we have
\[
P_C(Y_t)=\sigma_1(Y_t)u_1u_1^T,
\]
where $\sigma_1(Y_t)$ is the largest singular value of $Y_t$ and $u_1$ is the corresponding singular vector.
Using these two observations, the gradient descent algorithm given in \eqref{iterative_v1} is equivalent to the following.
\begin{algorithm}[H]
  \caption{Find rank 1 matrix from off-diagonal observations of covariance matrix}\label{alg:iter}
  \begin{algorithmic}[1]
    \State $X_1 =Y_1= P_{\Omega}(Q_2)$, $t=1$
    \While{ $|\sigma_1(X_{t-1}) - \sigma_1(X_{t-2})| > \epsilon$ }

      \State $\sigma_1(X_t), u_1 \leftarrow SVD(Y_t)$
 \State $Y_t \leftarrow \lambda u_1u_1^T - P_{\Omega}(\lambda u_1u_1^T - Q_2)=  Q_2 - diag(Q_2) + diag(\lambda u_1u_1^T)$
\State $X_{t+1}\leftarrow \sigma_1(X_t)u_1u_1^T$
    \EndWhile
    \State \Return $[\lambda=\sigma_1(X_t), v=u_1]$
  \end{algorithmic}
\end{algorithm}
In our proposed algorithm, we only need to calculate the first singular value and the corresponding singular vector at each iteration. The PROPACK Matlab package \citep{larsen2004propack} efficiently calculates the dominant singular values which will speed up our calculations especially for problems with large number of classifiers.

\section{Estimation of AUROC of the Base Classifiers using the Third Order Covariance Tensor}

The results of the previous sections help us estimate the vector $v$ whose entries are proportional to the performance of individual methods.  Moreover, from \eqref{eq:MLE1} we see that the vector $v$ was sufficient to form the SUMMA ensemble.  However, it is not possible to estimate the actual performance, i.e. the AUROC for each method, from the covariance matrix without {\it a priori} knowledge of the sample class prevalences.  To address this shortcoming, we developed a strategy for estimating the prevalence from the third order covariance tensor of unlabeled rank data. While our approach is similar to that of \citep{jaffe2015estimating}, we uniquely extend the iterative method presented in the previous section by using the generalization of singular value decomposition to tensor decomposition, \citep{karami2012compression}.
Given three methods $i,j$ and $k$, the third order covariance tensor is defined as
\begin{equation}
	Q_3(i,j,k)=\avgb{\Big(r_i-\avg{r_i}\Big)\Big(r_j-\avg{r_j}\Big)
	\Big(r_k-\avg{r_k}\Big)}. \label{def:tensor}
\end{equation}
In the previous sections we assumed that the rank predictions by base classifiers were conditionally independent.  Extending this assumption to triplets, $f_i, f_j, f_l$  for $i \neq j \neq l$,
\begin{align}
  \prob{f_i(x) = r, f_j(x) = s, f_l(x) = t | \sigma_k} &=
    P(f_i(x) = r | \sigma_k)
    P(f_j(x) = s | \sigma_k)
    P(f_l(x) = t | \sigma_k)\label{def:inde3},
\end{align}
for the $k^{th}$ sample $\sigma_k \in \{0,1\}$.
Using this observation we present the following corollary of Theorem~\ref{thm:indep}.
\begin{theorem} \label{thm:prevelance}
Under the assumption of conditionally independent base classifiers, the elements $i \neq j \neq l$ from the third order covariance tensor, Def~\eqref{def:tensor}, are given by,
\begin{equation}
	Q_3(i, j, l)=\rho (1-\rho) (1-2\rho) \Delta_i \Delta_j \Delta_l. \label{offdiag:tensor}
\end{equation}
\end{theorem}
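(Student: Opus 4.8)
The plan is to treat this statement as the $l=3$ specialization of Theorem~\ref{thm:indep}. The triple-factorization hypothesis in Equation~\eqref{def:inde3} is exactly the $n=3$ conditional independence assumption required there, and the tensor entry $Q_3(i,j,l)$ defined in Equation~\eqref{def:tensor} is precisely the third central moment $Q_3(1,2,3)$ after relabelling the chosen triple as $\{1,2,3\}$ (permissible since the moment is symmetric in its three arguments). So in principle nothing genuinely new must be proved: I would simply invoke Theorem~\ref{thm:indep} at $l=3$.

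Carrying that out, Theorem~\ref{thm:indep} yields
\begin{equation}
Q_3(i,j,l) = \rho(1-\rho)\bigl(\rho^{2} - (\rho-1)^{2}\bigr)\,\Delta_i\Delta_j\Delta_l, \nonumber
\end{equation}
so the only actual computation is to simplify the scalar prefactor $\rho^{2}-(\rho-1)^{2}$. Expanding gives $\rho^{2}-(\rho-1)^{2} = \rho^{2} - (\rho^{2} - 2\rho + 1) = 2\rho - 1$, producing the claimed cubic dependence on the $\Delta$'s together with the prevalence factor $\rho(1-\rho)(2\rho-1)$.

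If I preferred not to cite the general theorem, the self-contained route is to rerun its inductive engine at $l=3$: Claims 1 and 2 in the proof of Theorem~\ref{thm:indep} give $\avg{R_3| \sigma=0} = \rho^{3}\prod_{j}\Delta_j$ and $\avg{R_3| \sigma=1} = (\rho-1)^{3}\prod_{j}\Delta_j$, and the law of total expectation then gives $\avg{R_3} = \bigl(\rho(\rho-1)^{3} + (1-\rho)\rho^{3}\bigr)\prod_{j}\Delta_j$. Expanding the two cubics and collecting terms reproduces the same prefactor, $-2\rho^{3} + 3\rho^{2} - \rho = \rho(1-\rho)(2\rho-1)$.

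The one point demanding care -- and the only genuine obstacle -- is the overall sign. The computation above produces the factor $(2\rho-1)$, whereas the statement is written with $(1-2\rho)$; since these differ by a sign and the product $\Delta_i\Delta_j\Delta_l$ is cubic, I would double-check the orientation convention for the $\Delta$'s before committing. A useful consistency check is the symmetry noted after Theorem~\ref{thm:indep}: under $\rho\mapsto 1-\rho$ and $\Delta\mapsto-\Delta$, the factor $\rho(1-\rho)$ is invariant, $(1-2\rho)$ flips sign, and $\prod_{j}\Delta_j$ flips sign, so the whole expression $\rho(1-\rho)(1-2\rho)\prod_{j}\Delta_j$ is invariant -- exactly the behaviour expected of a third-order moment under relabelling of the classes, which pins down the functional form up to this sign bookkeeping.
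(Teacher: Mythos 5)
Your proposal is correct and takes exactly the paper's route: the paper's entire proof of this theorem is to invoke Theorem~\ref{thm:indep} with $l=3$ for the relabelled triple, which is precisely your first paragraph. Your sign worry is also well founded, and you should trust your computation over the printed statement: Theorem~\ref{thm:indep} at $l=3$ gives the prefactor $\rho(1-\rho)\bigl(\rho^2-(\rho-1)^2\bigr)=\rho(1-\rho)(2\rho-1)$, so the factor $(1-2\rho)$ appearing in Theorem~\ref{thm:prevelance} (and again in Equation~\eqref{eq:t3}) is inconsistent, by a sign, with the paper's own Theorem~\ref{thm:indep}; a direct check (e.g.\ degenerate conditional rank distributions, where the third central moment of a left-skewed mixture with minority positive class is negative) confirms that $(2\rho-1)$ is the correct sign under Definition~\ref{def:delta}. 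As you correctly observed, the class-relabelling symmetry cannot settle this, since both $(2\rho-1)\prod_j\Delta_j$ and $(1-2\rho)\prod_j\Delta_j$ are invariant under $\rho\mapsto 1-\rho$, $\Delta\mapsto-\Delta$. The slip is harmless downstream, because the prevalence estimation only uses $\beta=\lambda_t^2/\lambda_e^3$ from Equation~\eqref{eq:solve_beta}, and $(1-2\rho)^2=(2\rho-1)^2$.
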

\begin{proof}
	Let $i$, $j$ and $l$ be integers in the set $\set{1, 2, \dots, M}$ such that $i \neq j \neq l$.  The theorem is proved by applying Theorem~\ref{thm:indep} for $l=3$.
\end{proof}
Similar to the previous section, Theorem~\ref{thm:prevelance} shows that the tensor $T$ is an approximately rank-1 tensor with off-diagonal entries of $T$ given by $T \approx a \otimes a \otimes a$, where $a=\Big((\rho)(1-\rho)(1-2\rho)\Big)^{1/3} \Delta$.  In order to estimate $\Delta$ from the third order statistics, we extend our iterative method for decomposing the covariance matrix by using tensor SVD (tSVD) (\citep{SIAMRev2009Kolda}). Pseudo-code for the generalized iterative method algorithm is as follows:
\begin{algorithm}[H]
  \caption{Find rank 1 matrix from off-diagonal observations of covariance tensor}\label{alg:iter_t}
  \begin{algorithmic}[1]
    \State $X_1 =Y_1= P_{\Omega}(Q_3)$, $t=1$
    \While{ $|\sigma_1(X_{t-1}) - \sigma_1(X_{t-2})| > \epsilon$ }

      \State $\sigma_1(X_t), u_1 \leftarrow tSVD(Y_t)$
 \State $Y_t \leftarrow \lambda u_1u_1^T - P_{\Omega}(\lambda u_1u_1^T - Q_3)=  Q_3 - diag(Q_3) + diag(\lambda u_1u_1^T)$
\State $X_{t+1}\leftarrow \sigma_1(X_t)u_1u_1^T$
    \EndWhile
    \State \Return $[\lambda=\sigma_1(X_t), v=u_1]$
  \end{algorithmic}
\end{algorithm}
Similar to the covariance matrix case, the above algorithm gives us an estimate of the singular value of the tensor
\begin{equation}
	\lambda_t=\Big((\rho)(1-\rho)(1-2\rho)\Big) (\norm{\Delta})^3. \label{eq:t3}
\end{equation}
Lets refer to the eigenvalue corresponding to the rank one matrix $R$ estimated from $Q_2$ in Equation~\eqref{eq:lambda} as $\lambda_e$, and recall that it is equivalent to $\lambda_e = \rho(1-\rho) (\norm{\Delta})^2$.  Then, by defining the ratio $\beta=\lambda_t^2 / \lambda_e^3$ using Equations~(\ref{eq:t3},\ref{eq:lambda}) we find that,
\begin{eqnarray}
	\beta &=& (1-2\rho)^2/(\rho(1-\rho))\nonumber\\
	&\implies& \rho^2(\beta+4)-(4+\beta)\rho+1=0.\label{eq:solve_beta}
\end{eqnarray}
By rearranging the terms in Equation \eqref{eq:solve_beta} we find that
\begin{align}
	\rho (1-\rho)=\frac{1}{\beta+4} \implies
	\norm{\Delta}=\sqrt{\lambda_e	(\beta+4)}.
\end{align}
Knowledge of $\norm{\Delta}$ and by Equation \eqref{eq:lambda} we can compute the prevalence of each class label and for each $i$ base classifiers compute the true value $\Delta_i$.  Furthermore, by Theorem~\ref{thm:auc}, we may use $\Delta_i$ to compute the AUROC for each $i$ method.  It is clear now that under the assumption of conditional independence and weakly predictive base classifiers, we may estimate the AUROC for each base classifier without the knowledge of the underlying sample class labels.

\section{Numerical Examples}
%
In this section, we assess the performance of the SUMMA ensemble on i) synthetic data, and ii) real world data sets from the UCI Machine Learning Repository (\citealt{resource2013Lichman}).  In all cases we compare the performance of our SUMMA ensemble to the best individual base classifier and the unweighted average of base predictions denoted as the Wisdom of the Crowds ensemble, WOC for short (\citealt{NatMeth2012Marbach}).

\subsection{Synthetic Data}
%
In this section we validate the SUMMA ensemble using synthetic data.  Each data set represents $N$ sample rank predictions from $M$ conditionally independent base classifiers with unique performances.  Synthetic predictions were generated by producing random scores from two Gaussian distributions: one Gaussian represented scores from the negative class, while the other scores from the positive class.  A specific AUROC of each base classifier was controlled by adjusting the parameters of the respective class specific Gaussian distributions.  In this strategy the conditional independence assumption was satisfied by independently sampling the Gaussian distributions associated with each base classifier.  Once the samples were generated, we converted the scores to sample ranks.

We generated synthetic predictions using the outlined strategy for $M=30$ base classifiers, 500 samples from the positive and negative class for a total of $N=1000$ samples. We adjusted the parameters of the Gaussian distributions such that the distribution of base classifier AUROC values was uniformly distributed between $(0.4,0.8)$. Using these data we found SUMMA correctly estimated each base classifier's performance with a coefficient of determination of 0.95 (Figure~\ref{fig:1}a). In addition, Figure~\ref{fig:1}a shows that the SUMMA ensemble (red, AUROC=0.95) out-performs the best individual method, and the WOC ensemble (blue, AUROC=0.89), which aggregates prediction by averaging the sample ranks of the base classifiers.
\begin{figure}
	\centering
	\begin{subfigure}[t]{0.475\textwidth}
		\centering
		\includegraphics[width=\textwidth]{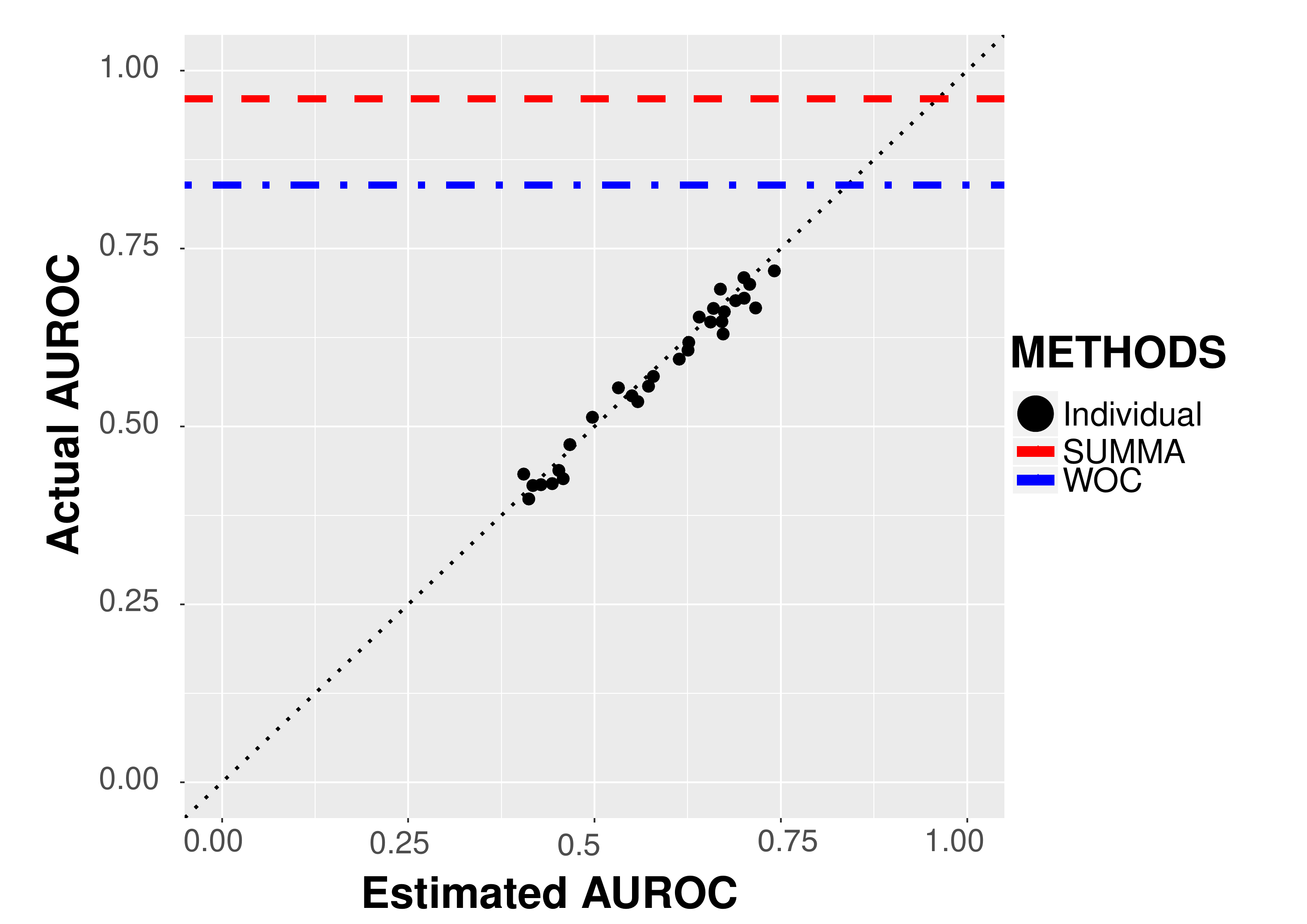}
		\caption{Inference of AUROC}
	\end{subfigure}
	\begin{subfigure}[t]{0.475\textwidth}
		\centering
		\includegraphics[width=\textwidth]{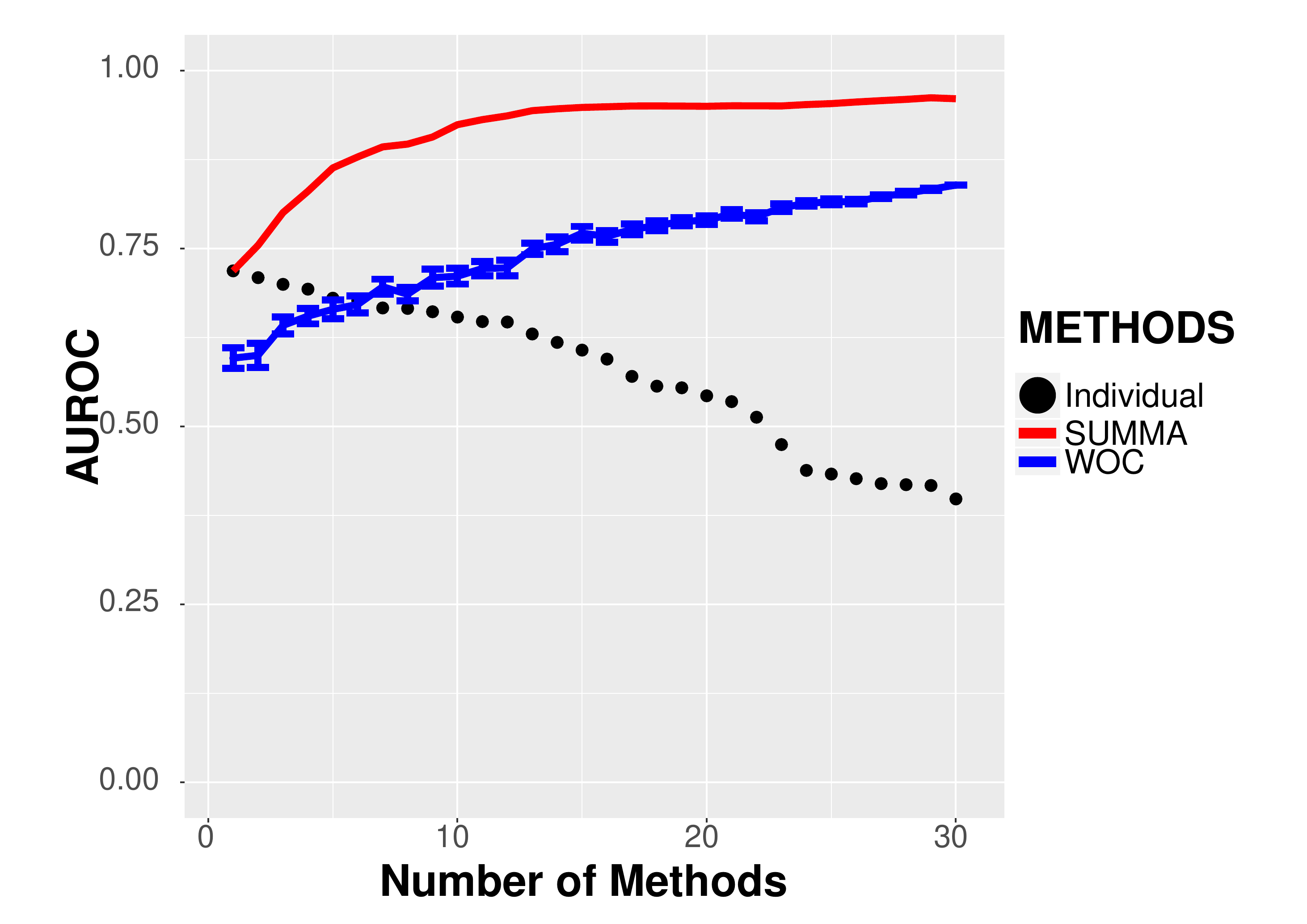}
		\caption{SUMMA ensemble and the number of base classifiers}
	\end{subfigure}
	\caption{Validation and analysis of the SUMMA ensemble with simulation data.}
	\label{fig:1}
\end{figure}

Figure~\ref{fig:1}b shows how the performance of the SUMMA and WOC ensembles change with the number of base classifiers. Here, we construct the $n^{th}$ SUMMA ensemble by aggregating the top $n$ performing methods.  Unlike SUMMA, in this unsupervised setting the WOC ensemble has no knowledge of the performances of each base classifier.  As a consequence, to predict the AUROC of n classifiers in Fig 1b the WOC ensemble chooses $n$ base classifiers at random. To achieve robust performance estimates for the $n^{th}$ WOC ensemble we measured the mean and standard error of the mean AUROC for 50 replicate WOC ensembles constructed from $n$ base classifiers.  Empirically, we find that the SUMMA ensemble increases in performance more readily as methods are added and saturates at a higher AUROC than the WOC ensemble (Figure~\ref{fig:1}b).  Of note, is that the SUMMA ensemble saturates, which suggests that a subset of base classifiers is sufficient to achieve the maximum performance. We leave investigation of this observation to a future study.

\begin{figure*}[htbp]
    \centering
    \begin{subfigure}[t]{0.47\textwidth}
      \includegraphics[width=3in]{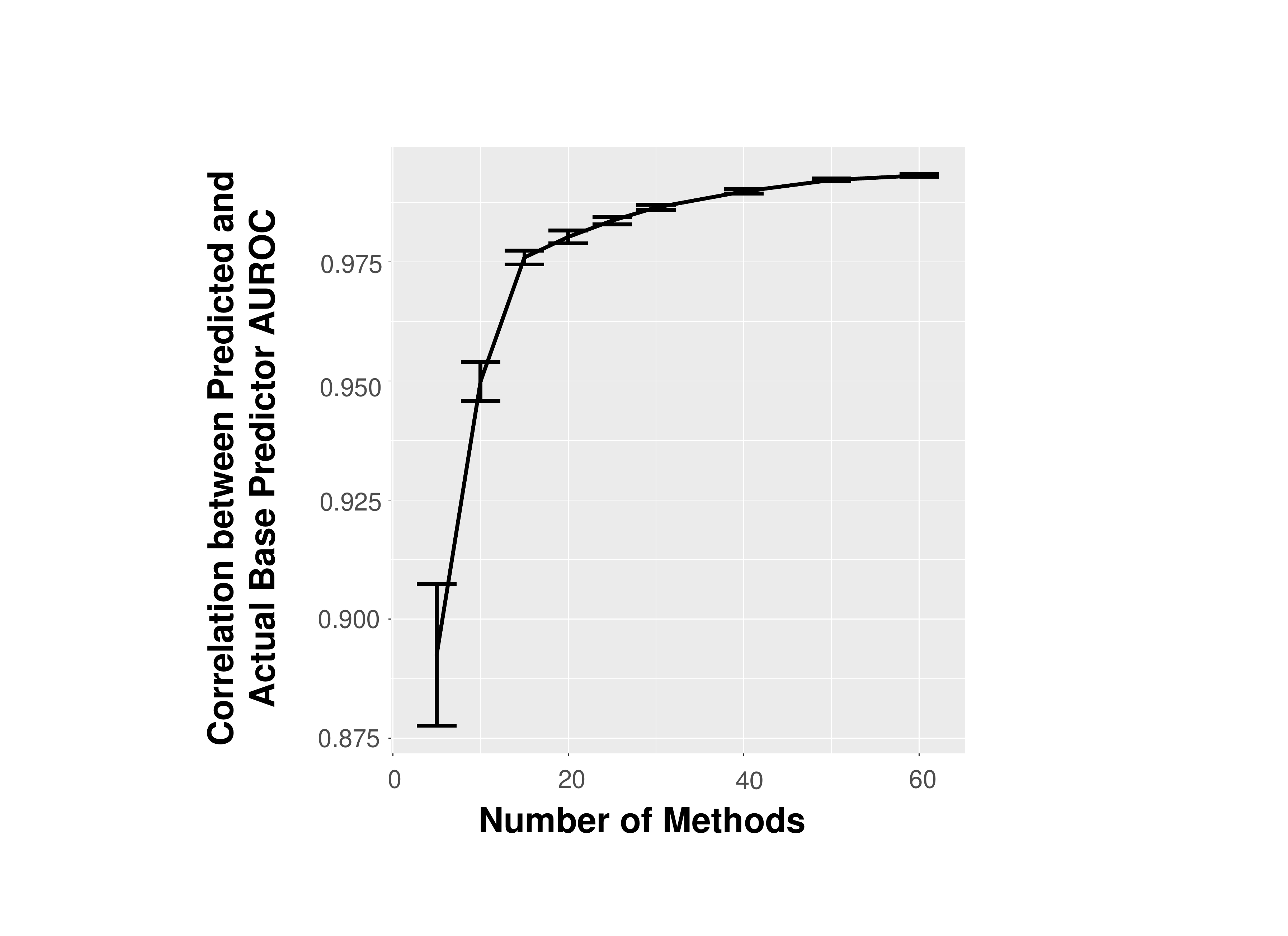}
        \caption{Correlation of true and SUMMA inferred AUROC of each base classifiers}
    \end{subfigure}%
    \begin{subfigure}[t]{0.47\textwidth}
        \includegraphics[width=3in]{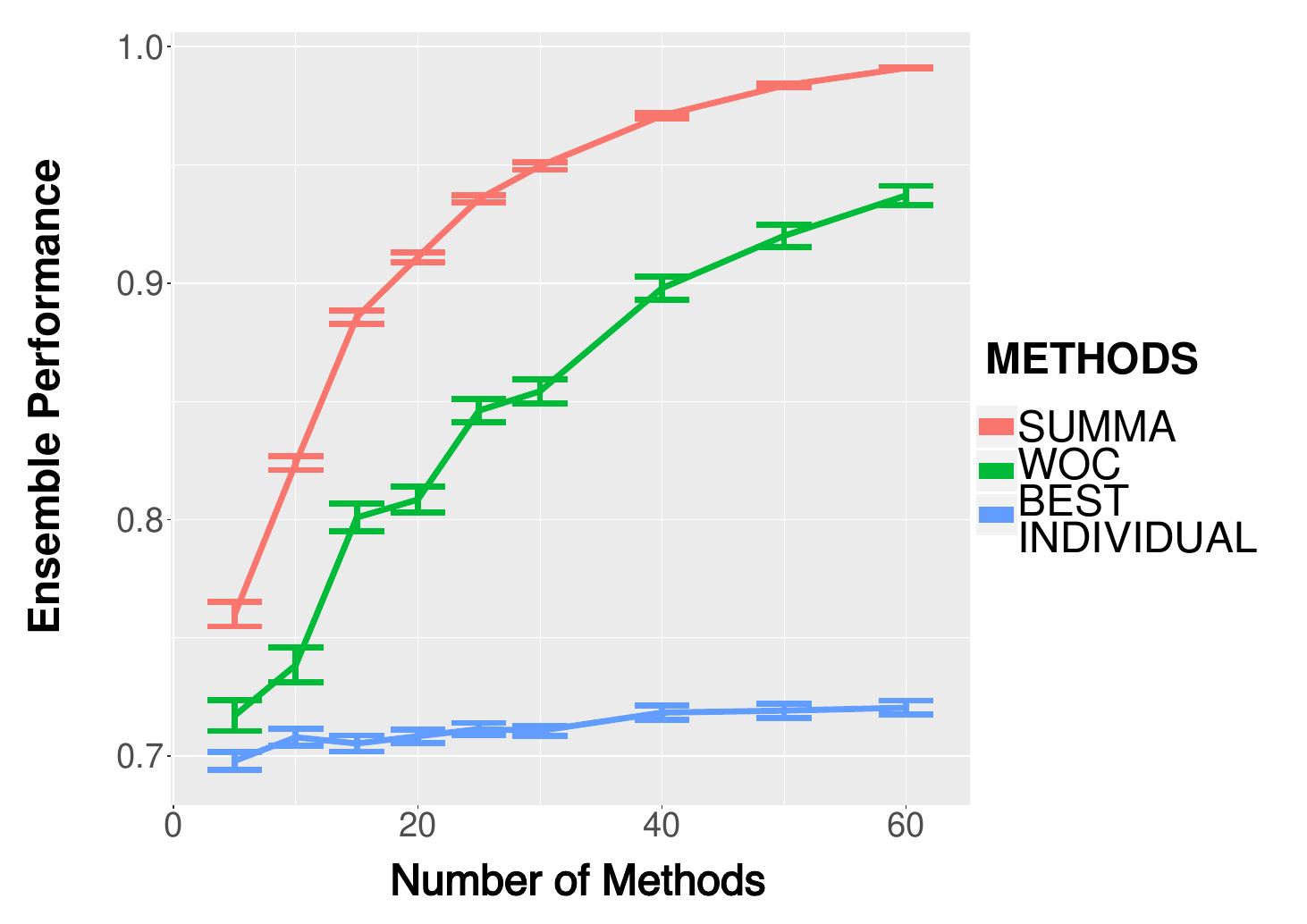}
        \caption{AUROC and the number of base classifiers}
    \end{subfigure}
    \caption{The dependence of the SUMMA ensemble with the number of base classifiers for $N=1000$ samples and $\rho=0.5$.}
\label{fig:aurocMclassifiers}
\end{figure*}

Next we empirically test the dependence of the SUMMA on varying number of methods, samples, and class prevalences.  In each case we change a single simulation parameter from the default values of, $M=30$ base classifiers, $N=1000$ samples, and the prevalence of class 1 $\rho = 1/2$.

We tested the influence of the number of methods by simulating predictions and applying the SUMMA algorithm to ensembles composed of $M = \set{5, 6, 7, \dots, 30}$ base classifiers.  This experiment is different from the analysis of Fig 1.b, because the SUMMA algorithm is applied to $M=\set{5, 6, 7, \dots, 30}$ (Fig. 2a) base classifiers as opposed to the covariance matrix of all 30 base classifiers.  Intuitively, inferring $\Delta$ for larger covariance matrices should become more accurate because the number of equations grows faster, $M(M-1)/2$, than the number of parameters $\Delta_i$, $M$.  Indeed, this intuition is confirmed in Figure~\ref{fig:aurocMclassifiers}. We see that the correlation between the predicted versus actual AUROC of base classifiers inferred by SUMMA for $M=5$ is $\approx$0.875, and increases readily to $\approx$0.975 for $M \geq 15$. The error bars is the figure is the standard error of the mean for 30 repeated experiments. We then tested how the number of methods affected the performance of the corresponding SUMMA ensemble.  In Figure~\ref{fig:aurocMclassifiers}b we find that the SUMMA ensemble outperforms the WOC and best individual performing methods. Moreover, as the number of methods increases the SUMMA performance converges to the perfect $1$.

From the Central Limit Theorem, one would expect that the performance of the SUMMA algorithm would increase with the number of samples.  This is simply because the error in estimating the covariance matrix elements decreases with $N$.  Indeed, we find that the accuracy of the SUMMA inferred AUROC of each respective method monotonically increases with a correlation of $\approx$0.575 to $\approx$1 with increasing $N$ from 30 to 4000 samples, Figure~\ref{fig:aurocNsamples}a.  Furthermore, the uncertainty of the covariance elements in data sets with less than 250 samples is sufficiently large to negatively influence the SUMMA ensemble performance (Figure~\ref{fig:aurocNsamples}b).

\begin{figure*}[t!]
    \centering
    \begin{subfigure}[t]{0.47\textwidth}
        \centering
        \includegraphics[width=3in]{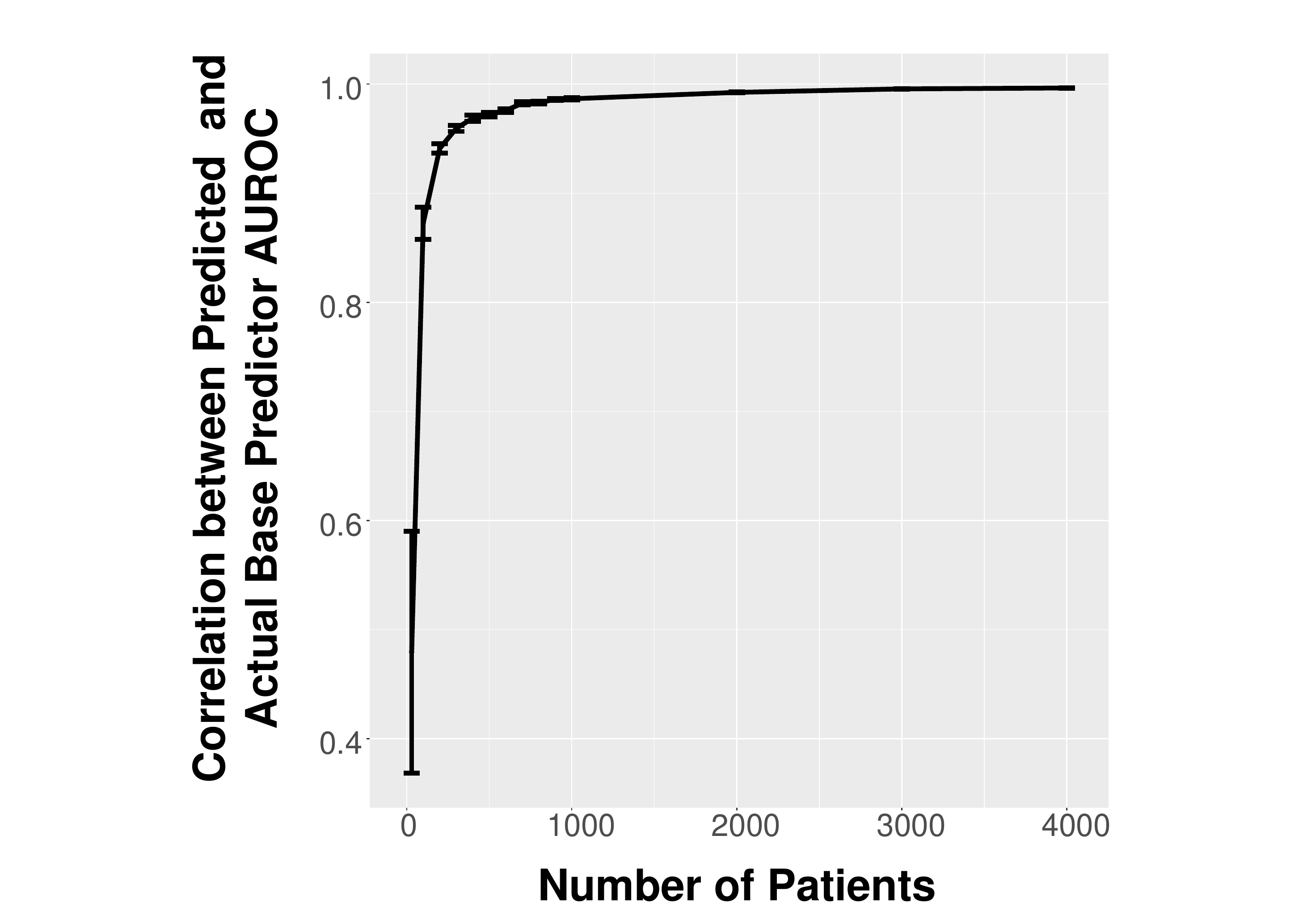}
        \caption{Correlation between the true and SUMMA inferred AUROC of each base classifiers}
    \end{subfigure}%
    ~
    \begin{subfigure}[t]{0.47\textwidth}
        \centering
        \includegraphics[width=3in]{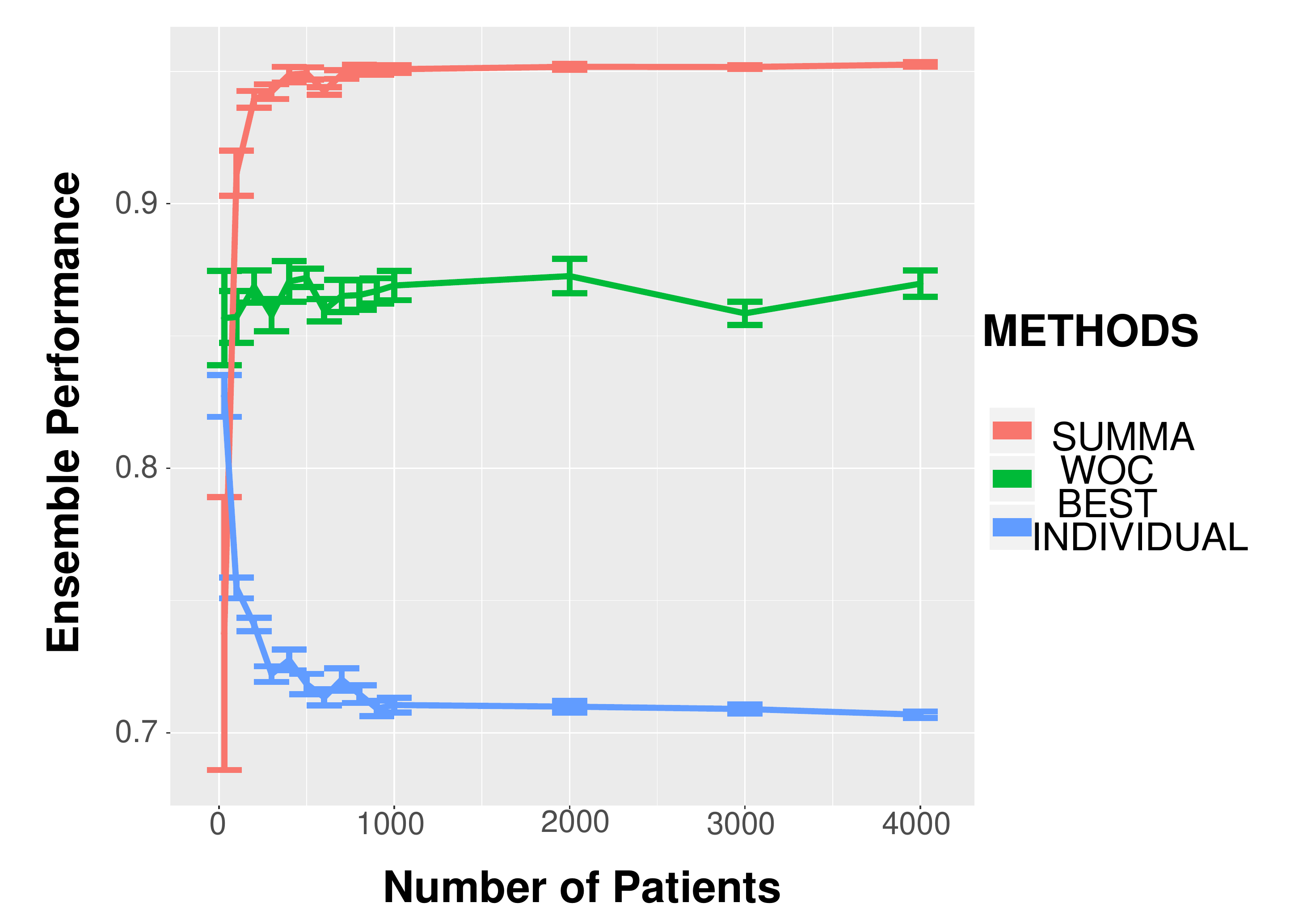}
        \caption{AUROC and the number of samples}
    \end{subfigure}
    \caption{The dependence of the SUMMA ensemble with the number of samples}
		\label{fig:aurocNsamples}
\end{figure*}

Lastly, we tested the influence of the class prevalence on the SUMMA algorithm.  We find that the accuracy of the SUMMA inferred AUROC decreases for $\rho\leq$0.2 or $\rho\geq$0.8, Figure~\ref{fig:aurocPrevalence}a.  We found this to be an intuitive result, because as the class prevalence moves to the extremes the dominating signal originates from a single sample class.  In these simulations the distribution of sample rank predictions by base classifiers are conditionally independent.  Consequently, as the prevalence goes to zero or one the empirical covariance should be smaller than the theoretical covariance.  Moreover, we find that the corresponding SUMMA ensemble from highly imbalanced data under-performs that from balanced data, albeit, not by much as shown in Figure~\ref{fig:aurocPrevalence}b.

\begin{figure*}[t!]
    \centering
    \begin{subfigure}[c]{0.47\textwidth}
        \centering
        \includegraphics[width=2in]{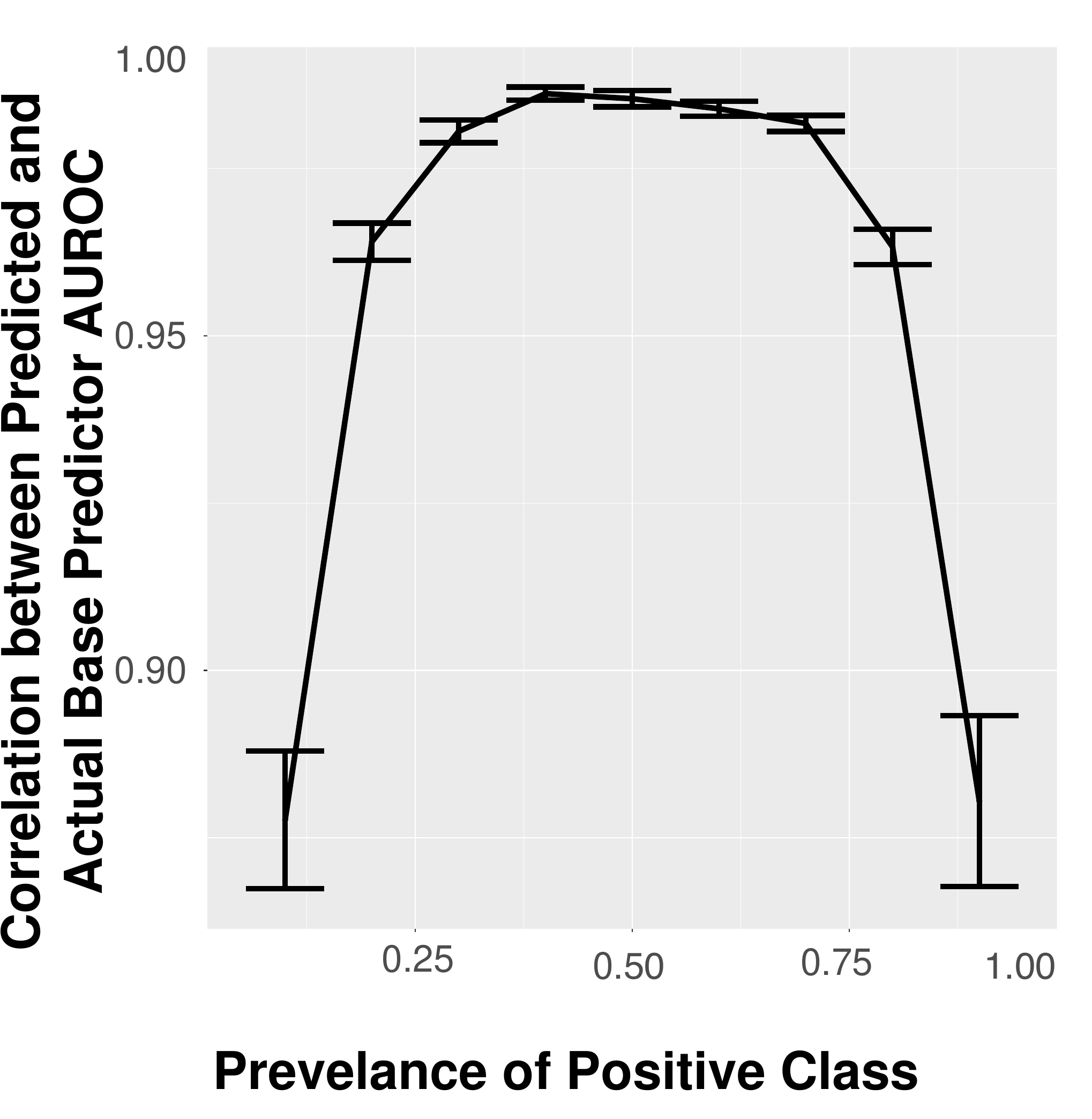}
        \caption{Correlation between true and SUMMA inferred AUROC of each base classifier}
    \end{subfigure}%
    ~
    \begin{subfigure}[c]{0.47\textwidth}
        \centering
        \includegraphics[width=3in]{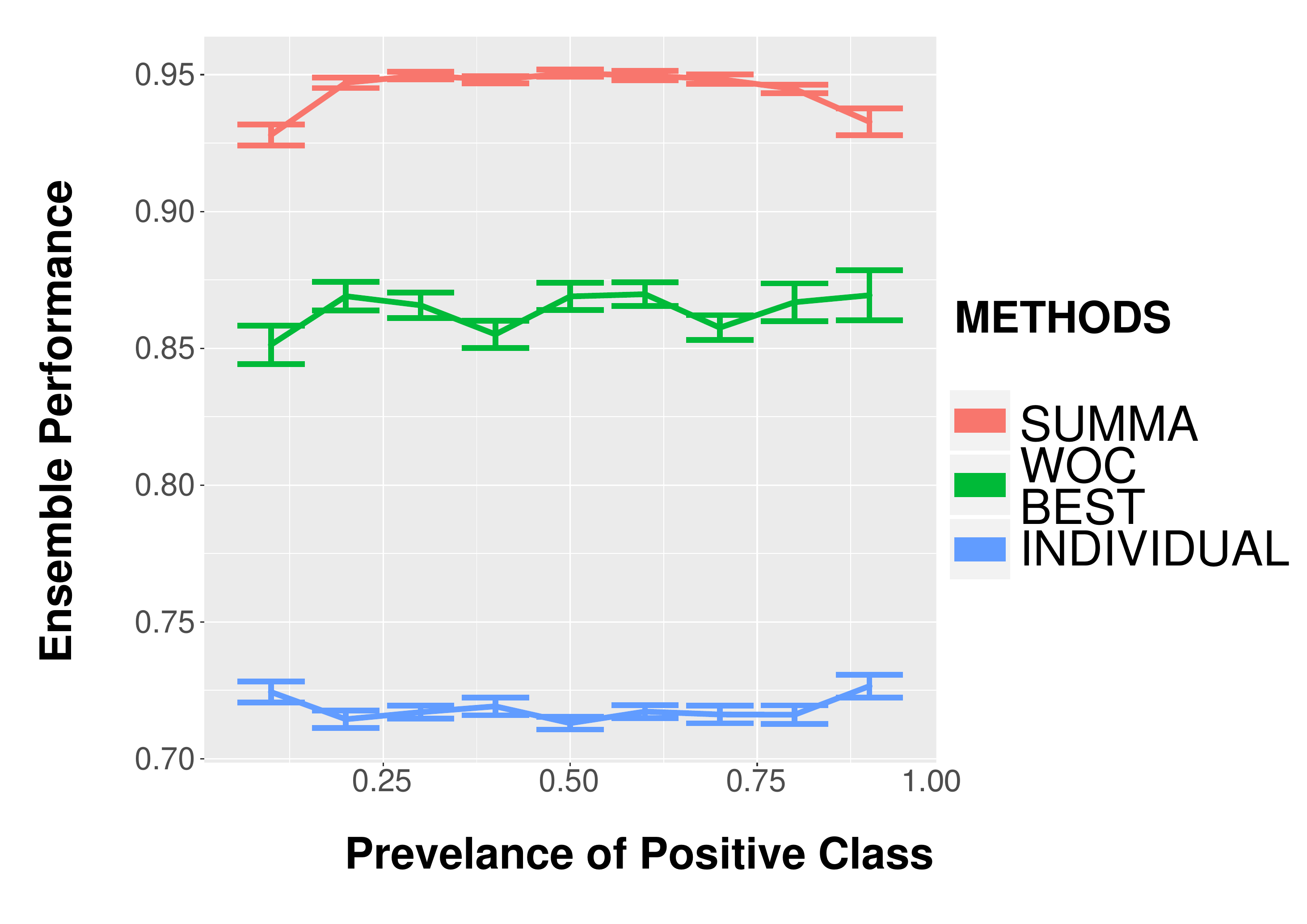}
        \caption{Ensemble AUROC and the class prevalence}
    \end{subfigure}
    \caption{The dependence of the SUMMA ensemble with the prevalence of class 1}
		\label{fig:aurocPrevalence}
\end{figure*}

\subsection{Real World Data from the UCI Machine Learning Repository}

In this subsection we test the SUMMA algorithm on six data sets (Table~\ref{t:1}) from the UCI Machine Learning Repository \citep*{blake1998uci, resource2013Lichman}.
\begin{table}
	\begin{center}
    \resizebox{0.98\columnwidth}{!}{
		 \begin{tabular}{||c | c  | c | c | c  ||}
		 \hline
		 Name & \# Features & \# Samples & Reference & Prevalence of \\ & & & & Minority Class\\ [0.5ex]
		 \hline
		Bank Marketing & 17 & 45211 & \citep{DSS2014Moro} & 0.11\\
		 \hline
		  &  &   & \citep{LSNO1990Mangasarian} &\\
		 Breast Cancer (Wisconsin) & 10 & 683 & \citep{PNAS1990Wolberg} & 0.34\\
		  &  &   & \citep{OMS1992Bennett} &\\
		 \hline
		 Ionosphere & 35 & 351  & \citep{resource2013Lichman} & 0.35\\
		 \hline
		 Mammographic Mass & 6 & 830 &  \citep{MedicalPhys2007Elter}& 0.48\\ [1ex] 
		 \hline
		Parkinsons & 23 & 195 &  \citep{BEO2007Little} &0.24\\ [1ex]
		 \hline
		Yeast & 9 & 892 &  \citep{resource2013Lichman} & 0.48\\ [1ex]
		 \hline
		\end{tabular}
        }
		\caption{Summary of Real World Data Sets from UCI Machine Learning Repository}
	 \label{t:1}
	\end{center}
\end{table}

First we developed SUMMA ensembles by training base classifiers from the R package caret on each data set in Table~\ref{t:1}.  We choose this package for its ease of use, the large diversity of methods, $M=22$ (Table~\ref{t:2}), and tools for cross-validation \citep{JSS2008Kuhn}.  Due to the diversity of the data sets, we developed two strategies for training the base classifiers.  First, for all the data sets with less than 2000 samples we randomly divided the samples into two sets, one for training and the other for testing.  Second, for the Bank Marketing Data Set, which had 45,000 samples, we randomly selected 1000 samples for training and different 1000 samples for the test set.  We then used the test set AUROC for testing the SUMMA inference strategy.

\begin{table}
	\begin{center}
		 \begin{tabular}{||c | c  | c ||}
		 \hline
		 Name & Main Method & RLibrary\\ [0.5ex]
		 \hline
		adaboost & Adaboost & fastAdaboost  \\
		 \hline
		avNNet& Model Averaged Neural Network & nnet  \\
		 \hline
		 bayesglm& Bayesian Generalized Linear Model & arm  \\
		 \hline
		 ctree & Conditional Inference Tree & party  \\
		 \hline
		 earth & Multivariate Adaptive Regression Spline & earth  \\
		 \hline
		  gbm & Stochastic Gradient Boosting & gbm  \\
		 \hline
		 glm & Generalized Linear Model & stats  \\
		 \hline
		 glmnet & Lasso and Elastic-Net Regularized Generalized Linear Models & glmnet  \\
		 \hline
		 J48 & C4.5-like Trees & RWeka  \\
		 \hline
		 Jrip & Rule-Based Classifier & RWeka  \\
		 \hline
		  C5.0 & Decision Trees and Rule-Based Models & C50 \\
		 \hline
		knn & k-Nearest Neighbors& kknn  \\
		 \hline
		  LMT & Logistic Model Trees & RWeka  \\
		 \hline
		 mlp& Multi-Layer Perceptron & RSNNS  \\
		 \hline
		 nb & Naive Bayes & klaR  \\
		 \hline
		  nnet & Neural network & nnet  \\
		 \hline
		  rf & Random Forest & randomForest  \\
		 \hline
		  rpart & Recursive Partitioning and Regression Trees & rpart  \\
		 \hline
		 simpls & Partial Least Squares & pls  \\
		 \hline
		svmLinear2& Support Vector Machine with Linear Kernel & e1071  \\
		 \hline
		 svmRadial & Support Vector Machine with Radial Kernel  & kernlab  \\
		 \hline
		xgbLinear & eXtreme Gradient Boosting & xgboost \\
		 \hline
		  xgbTree & eXtreme Gradient Boosting & xgboost  \\
		 \hline
		\end{tabular}
		\caption{Machine Learning Methods Used}
		\label{t:2}
	\end{center}
\end{table}
To test whether the SUMMA algorithm can infer the AUROC of each base classifier trained on the six UCI Machine Learning Repository data sets.  In all the data sets the correlation between the true and SUMMA inferred AUROC for all algorithms is above 0.75, Figure~\ref{fig:ex2_cor}.  The lower correlation in AUROC values from real world data as opposed to the synthetic data (correlation = 0.98) is likely a result of conditional dependence between base classifiers. Although this might lead to the decrease in the performance of SUMMA, Table~\ref{t:3} shows the rankings of top methods of each data set for the remaining datasets. We can easily observe that methods that perform best in one data set do not necessarily perform well in other data sets.  In fact, they can be one of the worst in other data sets.  This is most likely due to the distributions of the data being different in different data sets and methods with different theoretical backgrounds are more suitable to be applied in one type of data than other.

In comparison,  SUMMA performs better than the best individual method in the Bankmarketing, Parkinsons and Yeast datasets, and second best in the mammographic masses and ionosphere data sets.  For the breast cancer data set, 18 methods including SUMMA have an AUC of 0.97 and 7 other methods have performed better than SUMMA.  However, since almost every method has performed almost perfectly this indicates that the methods are more likely to be correlated.  Finally, for two of the datasets, Bankmarketing and Yeast, we analyzed how the number of integrated methods affects the performance of SUMMA prediction by examining randomly sampled combinations of individual methods ( Figures~\ref{fig:BMandYeast}a and b respectively).  SUMMA performs better than individual inference methods even when integrating small sets of individual predictions.  Performance increases further with the number of integrated methods. For instance, for 15 randomly selected inference methods, the SUMMA ensemble performs better than the best amongst the 15 methods in 98\% of the cases in the bankmarketing data set and it ranks best in 80\% of the cases and  best or second best in 97\% of the cases in the yeast data set demonstrating the robustness of SUMMA. Table~\ref{t:4} shows the frequency with which SUMMA outperforms the WOC ensemble prediction in the bankmarketting and yeast data. For example, for the bankmarketing data set if we combine random 10 teams, 99\% of the times SUMMA performs better than WOC. For the yeast data, SUMMA gives a better prediction in about 65\% of the times.

\begin{figure*}[t!]
  \centering
  \includegraphics[height=4in]{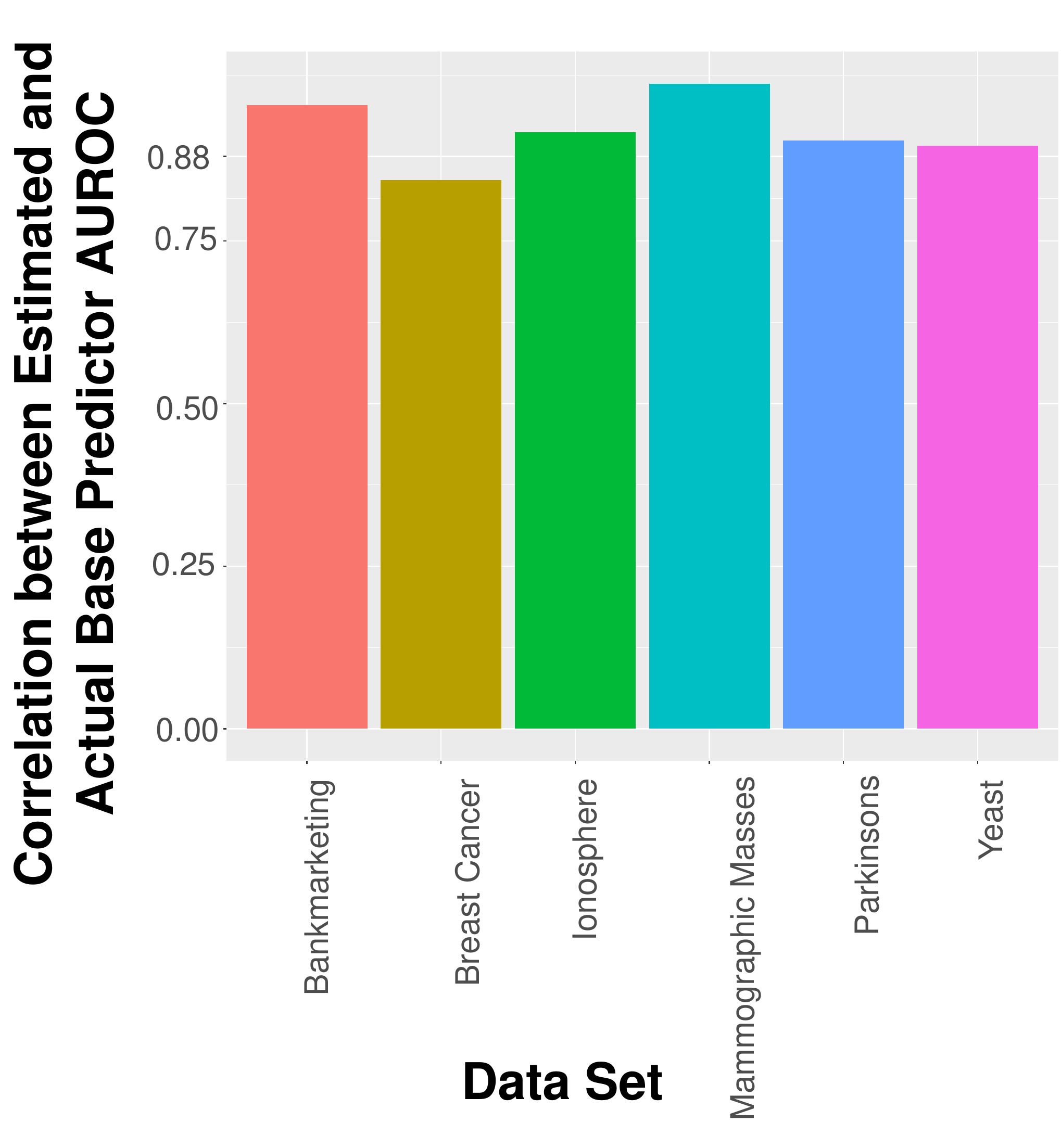}
  \caption{Correlation between the actual and estimated base AUROC on the 6 UCI dataset}
  \label{fig:ex2_cor}
\end{figure*}


\begin{figure*}[t!]
    \centering
    \begin{subfigure}[t]{0.5\textwidth}
        \centering
        \includegraphics[height=2in]{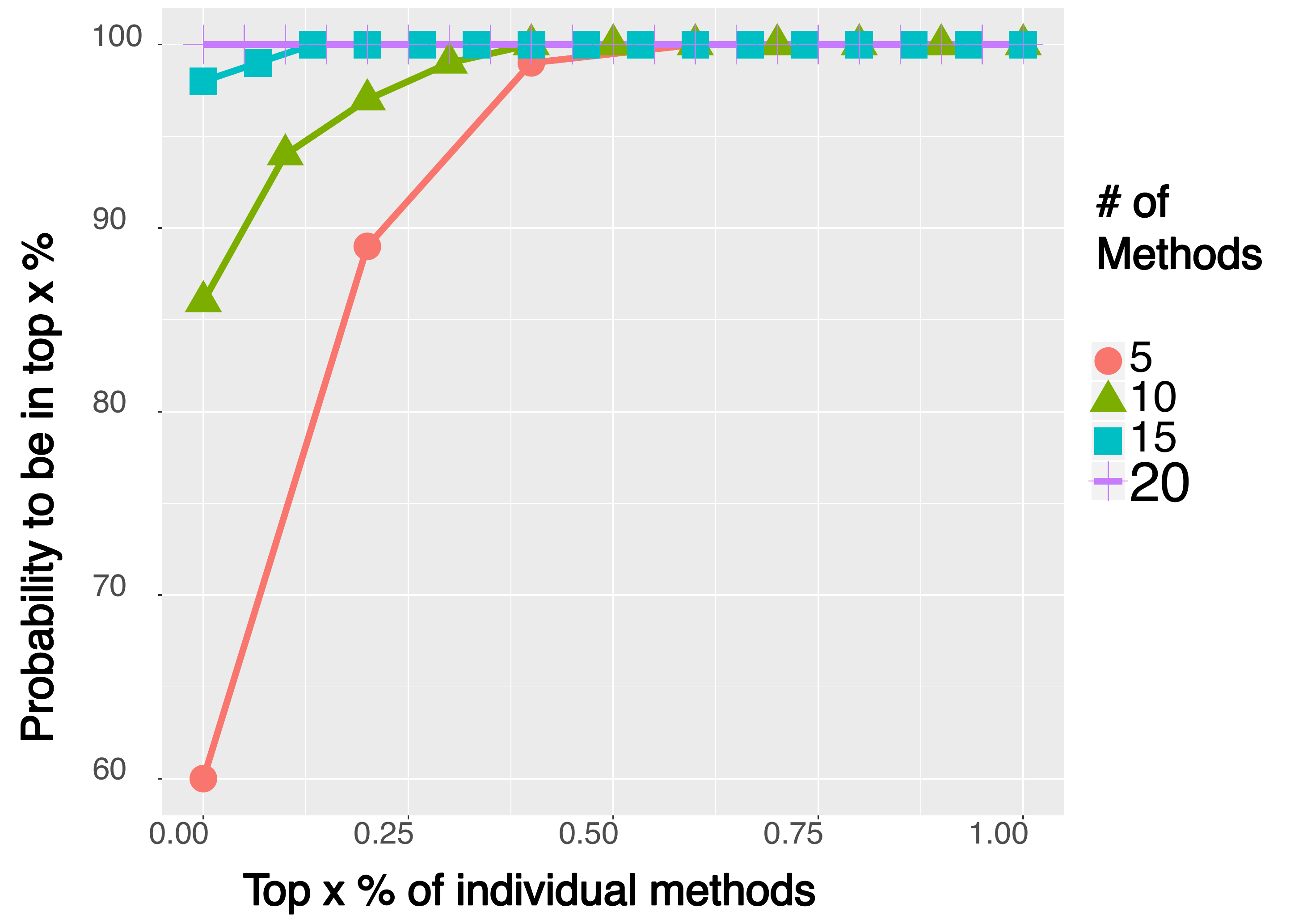}
        \caption{Bankmarketing Data}
    \end{subfigure}%
    ~
    \begin{subfigure}[t]{0.5\textwidth}
        \centering
        \includegraphics[height=2in]{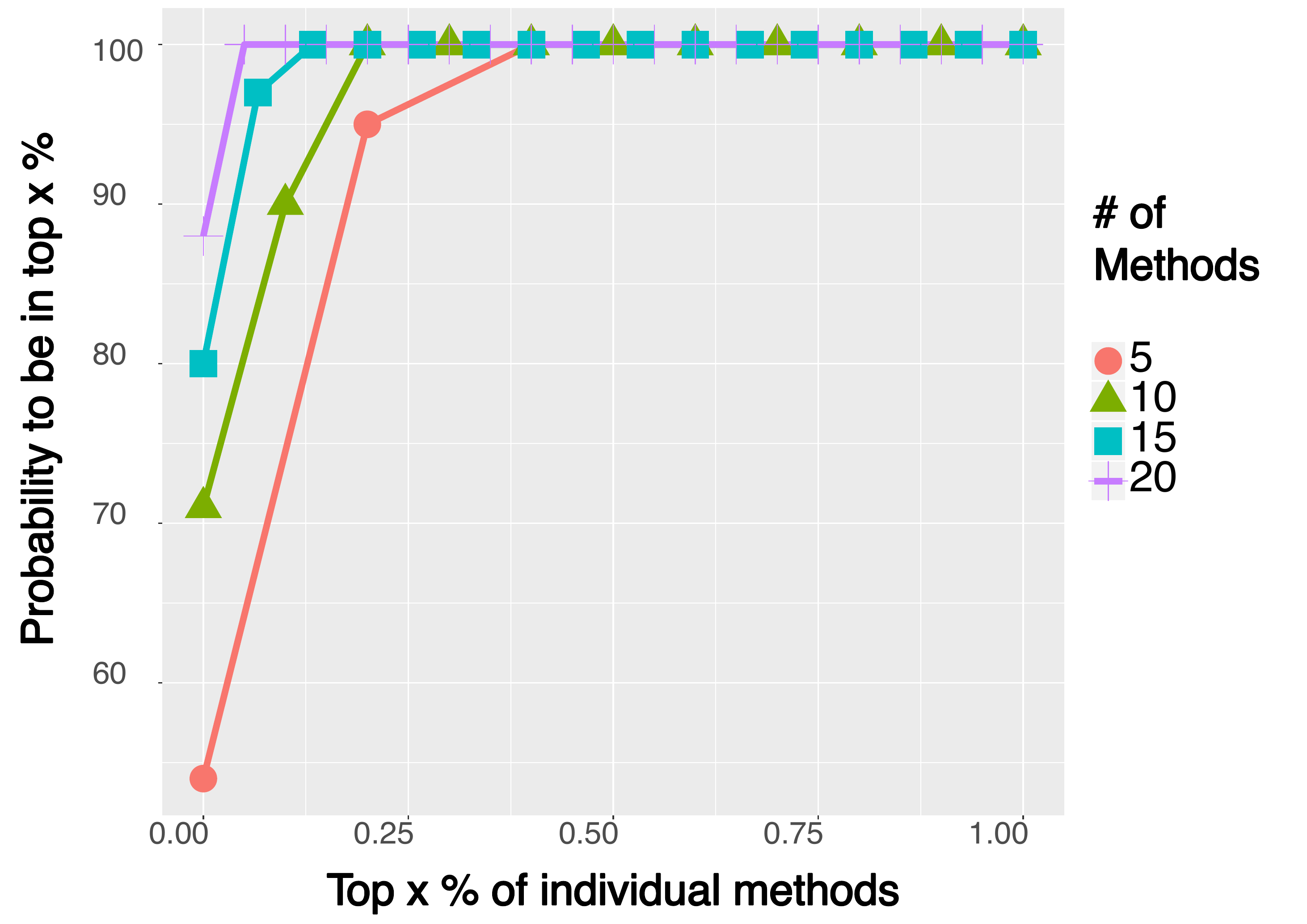}
        \caption{Yeast Data}
    \end{subfigure}
    \caption{SUMMA performance with increasing number of methods}
		\label{fig:BMandYeast}
\end{figure*}


\begin{table}
\begin{center}
	\begin{tabular}{||c | c| c | c | c | c |c ||}
	 \hline
	 	 & Bank & Breast &  & Mammographic &  & \\[0.5ex]
	 	Method & Marketing & Cancer & Ionosphere & Mass &  Parkinsons & Yeast\\[0.5ex]
	 \hline
		Earth & 2 &19&22 &5 &6 &6 \\
	 \hline
		Simpls & 14 & 1 & 15 &17 &12 &20\\
	 \hline
	 	svmRadial & 12 & 18 &1 &8 &5 &18\\
	 \hline
	 	gbm & 5 & 15 & 7  & 1 &4 &4  \\
	 \hline
		C5.0& 8 & 10 & 8 & 7 &21 &2  \\ [1ex]
	 \hline
		rf  & 7 & 9 &3 &6 &1& 5  \\
	 \hline
     SUMMA & 1 & 8 & 5 & 2 & 1 & 1 \\ \hline
	\end{tabular}
	\caption{Method Ranking (The Best Base Classifier in Each Data Set is Listed)}
 	\label{t:3}
\end{center}
\end{table}


\begin{table}
\begin{center}
 	\begin{tabular}{||c| c| c||}
    \hline
        & \multicolumn{2}{c||}{\% SUMMA is better than WOC}\\
	 	Number Methods & Bank Marketing Data & Yeast Data\\ [0.5ex]
        \hline
		5 & 90 &69\\
	 	\hline
		10 & 99 & 68\\
	 	\hline
		15 & 100 & 63\\
	 	\hline
		20 & 100 & 68  \\
		\hline
	\end{tabular}
	\caption{Percentage of times that SUMMA outperforms WOC.}
 \label{t:4}
\end{center}
\end{table}


\section{Conlusions}

In this paper, we introduced the Strategy for Unsupervised Multiple Method Aggregation, SUMMA.  We showed that by using the SUMMA algorithm we may infer the AUROC of conditionally independent base classifiers from the covariance of their rank predictions in the absence labeled data.  These inferred AUROC values are then incorporated into the SUMMA ensemble classifier, whose empirical performance was tested on simulated data and real world data sets with commonly used base classifiers alike.

Our strategy is a generalization of the Spectral Meta Lerner (SML) method developed by \citep{parisi2014ranking}, where the authors use a binary prediction matrix for the analysis and the balanced accuracy as a performance measure. 
In the SML method of \citep{parisi2014ranking} , the authors starting point is the covariance matrix of binary predictions. They show that under the assumption of conditional independence of base predictors given the class, the covariance matrix has a rank 1 approximation whose eigenvector entries are proportional to the balanced accuracies of the base predictors. In the case of SUMMA, our starting point is the covariance matrix of conditionally independent ranked predictions. We showed that this matrix has a rank 1 approximation whose eigenvector entries are related to the AUROC of the base predictors. A binary classifier can be considered to be a degenerate version of a ranked prediction in which the ranks assigned to the predicted positive class are all equal and larger than the unique rank assigned to the predicted negative class. In this degenerate case, it can be shown that the AUROC of such binary classifier is equal to the balanced accuracy. 
However, their analysis discards some information that comes from the ranking of samples that is readily available with most of the classifier used in everyday applications.

Our method SUMMA is well suited to be applied to methods generated in data challenges such as DREAM, Kaggle, KDD, etc. In such challenges the participants either use various modifications of existing methods or generate their own methods as such it is most likely that the methods are nearly conditionally independent. Moreover, our simulation results show that the best method in one problem is not necessarily the best in other problems. SUMMA, on the other hand, performed better than any other individual method on average. Therefore, SUMMA lends itself as a methodology that integrates different algorithms with a robust performance across different problems which is a desirable feature in community challenges.  

It is clear from the simulation studies that after aggregating a minimum number of methods, the SUMMA performance is stabilized and new methods do not improve the ensemble performance. A theoretic analysis for a stopping condition is left for future study.
A natural extension of our work would be to build a framework where there exists some structured correlation in the data.  An example could be a block structure where a group of methods are correlated with each other but uncorrelated with predictors in other groups.  A first step in such a problem would be to identify the block structure and then average the predictions in each block before running SUMMA.  Another extension of our work be a version of SUMMA for regression problems.  In fact, by requiring each method to rank samples we did a first step towards formulating the regression equivalent of SUMMA.  Further research is needed in this direction to build a theory.


\bibliography{jmlr}

\end{document}